\newcommand{\Ber}{\text{Ber}}
\newcommand{\Bin}{\text{Bin}}
\newcommand{\bound}{D}
\newcommand{\expect}{\mathbb{E}}
\newcommand{\Rot}{\text{Rot}}
\newcommand{\binomquantestimator}{\hat{\bar{X}}_{\pi_{sk}(\Bin(m,p))}}
\newcommand{\binomquantui}{U}
\newcommand{\binomnoise}{T}
\newcommand{\binomprotocol}{\pi_{sk}(\Bin(m,p))}
\newcommand{\quantkprotocol}{\pi_{sk}}
\newcommand{\dme}{DME }
\newcommand{\dmecap}{DME }
\newcommand{\binomdelta}{\Delta}
\def\normal{{\mathcal N}}
\def\M{{\mathcal M}}
\def\O{{\mathcal O}}
\def\D{{\mathcal D}}
\def\reals{{\mathbb R}}
\def\integers{{\mathbb Z}}
\def\norm#1{\mathopen\| #1 \mathclose\|}
\newcommand{\ignore}[1]{}
\newcommand\var{\text{Var}}
\def\bold0{\mathbf{0}}
\def\epsilon{\varepsilon}
\newcommand{\defeq}{\triangleq}
\newtheorem{theorem}{Theorem}
\newtheorem{lemma}{Lemma}
\newtheorem{claim}{Claim}
\newtheorem{corollary}{Corollary}
\newtheorem{definition}{Definition}
\newcommand{\cO}{\mathcal{O}}
\newcommand{\cX}{\mathcal{X}}
\newcommand{\cE}{\mathcal{E}}
\newcommand{\cC}{\mathcal{C}}
\newcommand{\namedref}[2]{\mbox{\hyperref[#2]{#1~\ref*{#2}}}}
\newcommand{\figurerefb}[2]{\mbox{\hyperref[#1]{Figure~\ref*{#1}#2}}}
\newcommand{\equationref}[1]{\mbox{\hyperref[#1]{(\ref*{#1})}}}
\renewcommand{\eqref}{\equationref}
\newcommand{\ed}{\stackrel{\mathrm{def}}{=}}
\newcommand{\EE}{\mathbb{E}}
\newcommand{\II}{\mathbb{I}}
\newcommand{\RR}{\mathbb{R}}
\title{cpSGD: Communication-efficient and differentially-private distributed SGD}
\author{
  Naman Agarwal \footnote{Most of the research was performed while the author was on an internship at Google Research, New York} \\
  Princeton University\\
  \texttt{namana@cs.princeton.edu} \\
  \and
  Ananda Theertha Suresh \\
  Google Research, New York \\
  \texttt{theertha@google.com} \\
  \and
  Felix Yu \\
  Google Research, New York \\
  \texttt{felixyu@google.com} \\
  \and
  Sanjiv Kumar \\
  Google Research, New York \\
  \texttt{sanjivk@google.com} \\
  \and
  H. Brendan Mcmahan \\
  Google Research, Seattle \\
  \texttt{mcmahan@google.com} \\
}
\begin{document}

\maketitle

\begin{abstract}
 Distributed stochastic gradient descent is an important subroutine in distributed learning. A setting of particular interest is when the clients are mobile devices, where two important concerns are communication efficiency and the privacy of the clients. Several recent works have focused on reducing the communication cost or introducing privacy guarantees, but none of the proposed communication efficient methods are known to be privacy preserving and none of the known privacy mechanisms are known to be communication efficient. To this end, we study algorithms that achieve both communication efficiency and differential privacy. For $d$ variables and $n \approx d$ clients, the proposed method uses $\cO(\log \log(nd))$ bits of communication per client per coordinate and ensures constant privacy.

We also extend and improve previous analysis of the \emph{Binomial mechanism} showing that it achieves nearly the same utility as the Gaussian mechanism, while requiring fewer representation bits, which can be of independent interest.

\end{abstract}

\section{Introduction}
\subsection{Background}

Distributed stochastic gradient descent (SGD) is a basic building block of modern machine learning~\citep{McdonaldGM10,  dean2012large, coates2013deep, PoveyZK14,abadi2016tensorflow, mcmahan2016federated, AlistarhLTV16}. In the typical scenario of
synchronous distributed learning, in every round, each client obtains a copy of a global model which it updates based on its local data.  The updates (usually in the form of gradients) are sent to a parameter server, where they are averaged and used to update the global model. Alternatively, without a central server, each client saves a global model, broadcasts the gradient to all other clients, and updates its model with the aggregated gradient.

Often, the communication cost of sending the gradient becomes the bottleneck~\citep{recht2011hogwild, li2014communication, li2014scaling}. 
To address this issue, several recent works have focused on reducing the communication cost of distributed learning algorithms via gradient quantization and sparsification~\citep{seide20141, gupta2015deep, suresh2016distributed, konevcny2016federated, konevcny2016randomized, alistarh2017communication, wen2017terngrad}. These algorithms have been shown to improve communication cost and hence communication time in distributed learning.  This is especially effective in 
the federated learning setting where clients are mobile devices with expensive up-link communication cost~\citep{fedlearning, konevcny2016federated}. 

While communication is a key concern in client based distributed machine learning, an equally important consideration is that of protecting the privacy of participating clients and their sensitive information. Providing rigorous privacy guarantees for machine learning applications has been an area of active recent interest~\citep{bassily2014private,wu2017bolt,sarwate2013signal}. Differentially private gradient descent algorithms in particular were studied in the work of~\citep{abadi2016deep}. A direct application of these mechanisms in distributed settings leads to algorithms with high communication costs. The key focus of our paper is to analyze mechanisms that achieve rigorous privacy guarantees as well as have communication efficiency.

\subsection{Communication efficiency}

We first describe synchronous distributed SGD formally. Let $F(w) : \reals^d \rightarrow \reals$ be of the form 
\[ F(w) = \frac{1}{M} \cdot \sum^M_{i=1} f_i(w),\]
where each $f_i$ resides at the $i^{th}$ client. For example, $w$'s are weights of a neural network and $f_i(w)$ is the loss of the network on data located on client $i$. Let $w^0$ be the initial value. At round $t$, the server transmits $w^t$ to all the clients and asks a random set of $n$ (batch size / lot size) clients to transmit their local gradient estimates ${g}^t_i(w^t)$. Let $S$ be the subset of clients. 

The server updates as follows 
\[ g^t(w^t) = \frac{1}{n}\sum_{i \in S} {g}^t_i(w^t),
\qquad \qquad w^{t+1} \defeq w^t - \gamma g^t(w^t)\]
for some suitable choice of $\gamma$.  Other optimization algorithms such as momentum, Adagrad, or Adam can also be used instead of the SGD step above.

Naively for the above protocol, each of the $n$ clients needs to transmit $d$ reals, typically using $\cO(d \cdot \log 1/\eta)$ bits\footnote{$\eta$ is the per-coordinate quantization accuracy. To represent a $d$ dimensional vector $X$ to an constant accuracy in Euclidean distance, each coordinate is usually quantized to an accuracy of $\eta = 1/\sqrt{d}$.}.

This communication cost can be prohibitive, e.g., for a medium size PennTreeBank language model~\citep{zaremba2014recurrent}, the number of parameters $d > 10\text{ million}$ and hence total cost is $\sim 38 $MB (assuming 32 bit float), which is too large to be sent from a mobile phone to the server at every round. 

Motivated by the need for communication efficient protocols, various quantization algorithms have been proposed to reduce the communication cost \citep{suresh2016distributed,konevcny2016federated, konevcny2016randomized, tsuzuku2018variancebased, lin2018deep, wen2017terngrad, AlistarhLTV16}. In these protocols, the clients quantize the gradient by a function $q$ and send an efficient representation of $q({g}^t_i(w^t))$ instead of its actual local gradient ${g}^t_i(w^t)$.  The server computes the gradient as 
\[
\tilde{g}^t(w^t) = \frac{1}{n}\sum_{i \in S} q({g}^t_i(w^t)),
\]
and updates $w^t$ as before. Specifically, \cite{suresh2016distributed} proposes a quantization algorithm which reduces the requirement of full (or floating point) arithmetic precision to a bit or few bits per value on average. There are many subsequent works e.g., see~\cite{konevcny2016randomized} and in particular \cite{AlistarhLTV16} showed that stochastic quantization and Elias coding~\citep{elias1975} can be used to obtain communication-optimal SGD for convex functions. If the expected communication cost at every round $t$ is bounded by $c$, then the total communication cost of the modified gradient descent is  at most
\begin{equation}
\label{eq:com}
T \cdot c.
\end{equation}
All the previous papers relate the error in gradient compression to SGD convergence. We first state one such result for completeness for non-convex functions and prove it in Appendix~\ref{appsec:biasedsgd}. Similar (and stronger) results can be obtained for (strongly) convex functions using results in~\cite{ghadimi2013stochastic} and~\cite{rakhlin2012making}. 
\begin{corollary}[\cite{ghadimi2013stochastic}]
\label{cor:qsgd}
Let $F$ be $L$-smooth 
and $\forall x\; \|\nabla F(x)\|_2 \leq \bound$. 
Let $w^0$ satisfy $F(w^0) - F(w^*) \leq D_F$. Let $q$ be a quantization scheme, and 
$\gamma \defeq \min\left\{L^{-1}, \sqrt{2D_F}(\sigma \sqrt{LT})^{-1}\right\},$ then after $T$ rounds

\[
\expect_{t \sim (\text{Unif}[T])}[ \|\nabla F(w^t)\|^2_2] \leq \frac{2D_FL}{T} + \frac{2\sqrt{2}\sigma\sqrt{LD_F}}{\sqrt{T}} + \bound B,
\]
\begin{equation}
 \label{eq:gvarq}
 \text{ where }\qquad
\sigma^2 = \max_{1\leq t \leq T} 2\expect[\|g^t(w^t) - \nabla F(w^t)\|^2_2]  +  2\max_{1\leq t \leq T} \expect_q[\|{g}^t(w^t) - \tilde{g}^t(w^t)\|^2_2],
 \end{equation} 
 and 
 $ B = \max_{1\leq t \leq T} \|\expect_q[{g}^t(w^t) - \tilde{g}^t(w^t)]\|$. The expectation in the above equations is over the randomness in gradients and quantization.
\end{corollary}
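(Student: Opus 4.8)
The plan is to run the classical nonconvex-SGD descent argument, but applied to the \emph{biased} estimator $\tilde g^t(w^t)$ in place of an unbiased gradient, so that the quantization bias (bounded by $B$) and the combined sampling-plus-quantization noise (bounded by $\sigma^2$) each surface as an additive error term in the final rate. Write $\expect_t[\cdot]$ for the expectation conditioned on $w^t$, i.e.\ over the round-$t$ choice of the client subset $S$ and the quantization randomness, and set $b^t \defeq \expect_t[\tilde g^t(w^t)] - \nabla F(w^t)$, so that $\|b^t\| \le B$.

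First I would apply $L$-smoothness at the update $w^{t+1} = w^t - \gamma \tilde g^t(w^t)$,
\[
F(w^{t+1}) \le F(w^t) - \gamma\,\langle \nabla F(w^t),\, \tilde g^t(w^t)\rangle + \tfrac{L\gamma^2}{2}\,\|\tilde g^t(w^t)\|^2 ,
\]
and take $\expect_t$. For the linear term, $\expect_t\langle \nabla F(w^t), \tilde g^t(w^t)\rangle = \|\nabla F(w^t)\|^2 + \langle \nabla F(w^t), b^t\rangle \ge \|\nabla F(w^t)\|^2 - \bound B$, by Cauchy--Schwarz and the hypothesis $\|\nabla F(w^t)\| \le \bound$. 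For the quadratic term I would expand $\|\tilde g^t(w^t)\|^2 = \|\nabla F(w^t)\|^2 + 2\langle \nabla F(w^t), \tilde g^t(w^t) - \nabla F(w^t)\rangle + \|\tilde g^t(w^t) - \nabla F(w^t)\|^2$; in expectation the cross term contributes at most $2\bound B$ and the last term at most $\sigma^2$ --- this is exactly where the factors of $2$ in the definition \eqref{eq:gvarq} of $\sigma^2$ are used, via $\tilde g^t(w^t) - \nabla F(w^t) = \bigl(g^t(w^t) - \nabla F(w^t)\bigr) + \bigl(\tilde g^t(w^t) - g^t(w^t)\bigr)$ together with $\|a+b\|^2 \le 2\|a\|^2 + 2\|b\|^2$.

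Combining these and using $\gamma \le 1/L$ to absorb $\tfrac{L\gamma^2}{2}\|\nabla F(w^t)\|^2 \le \tfrac{\gamma}{2}\|\nabla F(w^t)\|^2$ (and $L\gamma^2 \bound B \le \gamma \bound B$), the one-step inequality becomes $\expect_t[F(w^{t+1})] \le F(w^t) - \tfrac{\gamma}{2}\|\nabla F(w^t)\|^2 + \cO(\gamma \bound B) + \tfrac{L\gamma^2}{2}\sigma^2$. Taking full expectations, summing over $t = 0,\dots,T-1$, telescoping, and using $F(w^0) - F(w^*) \le D_F$ together with $F(w^T) \ge F(w^*)$, I get $\tfrac{\gamma}{2}\sum_{t=0}^{T-1}\expect\|\nabla F(w^t)\|^2 \le D_F + \cO(T\gamma \bound B) + \tfrac{TL\gamma^2}{2}\sigma^2$; dividing by $T\gamma/2$ gives $\expect_{t\sim\mathrm{Unif}[T]}[\|\nabla F(w^t)\|^2] \le \tfrac{2D_F}{T\gamma} + L\gamma\sigma^2 + \cO(\bound B)$. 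Finally I substitute $\gamma = \min\{L^{-1},\ \sqrt{2D_F}(\sigma\sqrt{LT})^{-1}\}$ and split into cases: if $\gamma = \sqrt{2D_F}(\sigma\sqrt{LT})^{-1}$ then $\tfrac{2D_F}{T\gamma}$ and $L\gamma\sigma^2$ each equal $\sqrt2\,\sigma\sqrt{LD_F}/\sqrt T$; if $\gamma = L^{-1}$ then $L^{-1}\le\sqrt{2D_F}(\sigma\sqrt{LT})^{-1}$ forces $\sigma^2 \le 2D_FL/T$, so $\tfrac{2D_F}{T\gamma}+L\gamma\sigma^2 = \tfrac{2D_FL}{T}+\sigma^2 \le \tfrac{2D_FL}{T}+\tfrac{2\sqrt2\,\sigma\sqrt{LD_F}}{\sqrt T}$. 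In both cases this is at most the claimed bound, absorbing the $\cO(\bound B)$ into $\bound B$.

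I do not expect a genuine obstacle --- this is a classical result (Ghadimi--Lan) stated only for completeness. The two points that need care are (i) propagating the bias $b^t$ through \emph{both} the linear and the quadratic term, since handling it only in the linear term would leave a stray $\|b^t\|^2$; and (ii) checking that the step-size cap $\gamma \le 1/L$ used in the descent step is consistent with the $\min$ defining $\gamma$, which is precisely what makes the two-case bound on $\tfrac{2D_F}{T\gamma} + L\gamma\sigma^2$ close.
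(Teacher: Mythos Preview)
Your argument is essentially the paper's: apply $L$-smoothness, write $\tilde g^t = \nabla F(w^t) + \delta_t$, take expectation, telescope, and do the two-case step-size analysis. The one place you diverge is in handling the cross term: you bound the bias contribution from the linear term ($\gamma\bound B$) and from the quadratic term ($L\gamma^2\bound B$) separately, which after dividing by $T\gamma/2$ leaves you with $4\bound B$ rather than the stated $\bound B$. The paper instead keeps the two cross terms combined as $-\gamma(1-\gamma L)\langle\nabla F(w^t),\delta_t\rangle$ before bounding, so after dividing by $T\gamma(1-\gamma L/2)$ the bias coefficient is $\tfrac{1-\gamma L}{1-\gamma L/2}\le 1$, giving exactly $\bound B$. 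Your ``absorbing the $\cO(\bound B)$ into $\bound B$'' doesn't quite recover the constant as stated, but the fix is the one-line change above; otherwise the proof is the same.
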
 
The above result relates the convergence of distributed SGD for non-convex functions to the \emph{worst-case} mean square error (MSE) and bias in gradient mean estimates in Equation~\eqref{eq:gvarq}. Thus smaller the mean square error  in gradient estimation, better convergence. Hence, we focus on the problem of distributed mean estimation (DME), where the goal is to estimate the mean of a set of vectors.
\subsection{Differential privacy}
While the above schemes reduce the communication cost, it is unclear what (if any) privacy guarantees they offer.  We study privacy from the lens of differential privacy (DP). The notion of differential privacy \citep{dwork2006calibrating} provides a strong notion of individual privacy while permitting useful data analysis in machine learning tasks. We refer the reader to \citep{dworkroth} for a survey.  Informally, for the output to be differentially private, the estimated model should be indistinguishable whether a particular client's data was taken into consideration or not. We define this formally in Section \ref{sec:overall_diff_privacy}.

In the context of client based distributed learning, we are interested in the privacy of the gradients aggregated from clients; differential privacy for the average gradients implies privacy for the resulting model since DP is preserved by post-processing. 

The standard approach is to let the  server add the noise to the averaged gradients (e.g., see~\cite{dworkroth, abadi2016deep} and references within). However, the above only works under a restrictive assumption that the clients can trust the server. Our goal is to also minimize the need for clients to trust the central aggregator, and hence we propose the following model:

\emph{Clients add their share of the noise to their gradients $g^t_i$ before transmission. Aggregation of gradients at the server results in an estimate with noise equal to the sum of the noise added at each client.}

This approach improves over server-controlled noise addition in several scenarios:

\textbf{Clients do not trust the server}: Even in the scenario when the server is not trustworthy, the above scheme can be implemented via cryptographically secure aggregation schemes~\citep{bonawitz2016practical}, which ensures that the only information about the individual users the server learns is what can be inferred from the sum. Hence, differential privacy of the aggregate now ensures that the parameter server does not learn any individual user information. This will encourage clients to participate in the protocol even if they do not fully trust the server. We note that while secure aggregation schemes add to the communication cost (e.g., \cite{bonawitz2016practical} adds $\log_2(k \cdot n)$ for $k$ levels of quantization), our proposed communication benefits still hold. For example, if $n = 1024$, a 4-bit quantization protocol would reduce communication cost by 67\% compared to the 32 bit representation.

\textbf{Server is negligent, but not malicious}: the server may "forget" to add noise, but is not malicious and not interested in learning characteristics of individual users. However, if the server releases the learned model to public, it needs to be differentially-private.

A natural way to extend the results of~\cite{dworkroth,abadi2016deep} is to let individual users add Gaussian noise to their gradients before transmission. Since the sum of Gaussians is Gaussian itself, differential privacy results follow. However, the transmitted values now are real numbers and the benefits of gradient compression are lost. Further, secure aggregation protocols~\cite{bonawitz2016practical} require discrete inputs. To resolve these issues, we propose that the clients add noise drawn from an appropriately parameterized Binomial distribution. We refer to this as the \emph{Binomial mechanism}. Since Binomial random variables are discrete, they can be transmitted efficiently. Furthermore, 
the choice of the Binomial is convenient in the distributed setting because sum of Binomials is also binomially distributed i.e., if 
\[
Z_1,Z_2 \sim \Bin(N_1, p) \;\;\text{ then }\;\; Z_1 + Z_2 \sim \Bin(N_1 + N_2, p).
\]
Hence the total noise post aggregation can be analyzed easily, which is convenient for the distributed setting\footnote{Another choice is the Poisson distribution. Different from Poisson, the Binomial distribution has bounded support and has an easily analyzable communication complexity which is always bounded.}.
Binomial mechanism can be of independent interest in other applications with discrete output as well. Furthermore, unlike Gaussian it avoids floating point representation issues. 

\subsection{Summary of our results}
\textbf{Binomial mechanism}: We first study Binomial mechanism as a generic mechanism to release discrete valued data. Previous analysis of the Binomial mechanism (where you add noise $\Bin(N,p)$) was due to~\cite{dwork2006our}, who analyzed the $1$-dimensional case for $p = 1/2$ and showed that to achieve $(\epsilon,\delta)$ differential privacy, $N$ needs to be $\geq 64 \log(2/\delta)/\epsilon^2$. We improve the analysis in the following ways: 
\begin{itemize}[leftmargin=*]
\item \textbf{$d$-dimensions.} We extend the analysis of $1$-dimensional Binomial mechanism to $d$ dimensions. Unlike the Gaussian distribution, Binomial is not rotation invariant making the analysis more involved. The key fact utilized in this analysis is that Binomial distribution is locally rotation-invariant around the mean. 
\item \textbf{Improvement.}
 We improve the previous result and show that $N \geq 8 \log(2/\delta)/\epsilon^2$ suffices for small $\epsilon$, implying that the Binomial and Gaussian mechanism perform identically as $\epsilon \to 0$. We note that while this is a constant improvement , it is crucial in making differential privacy practical.
\end{itemize}

\textbf{Differentially-private distributed mean estimation (DME)}: A direct application of Gaussian mechanism requires $n \cdot d$ reals and hence $n \cdot d  \cdot \log (n d)$ bits of communication. This can be prohibitive in practice. However, a direct application of quantization~\cite{suresh2016distributed} and Binomial mechanism has high communication cost. We show that random rotation together with the notion of \emph{high probability sensitivity} can significantly improve communication.

In particular, for $\epsilon=O(1)$, we provide an algorithm achieving equal privacy and error as that of the Gaussian mechanism with communication 
\[\leq n \cdot d \cdot \left(\log_2 \left( 1+\frac{d}{n}\right )
+ \cO(\log \log \left( \frac{nd}{\delta} \right) \right) \text{ bits,}
\]
per round of distributed SGD. Hence when $d \approx n$, the number of bits is $n \cdot d \cdot \log (\log (nd)/\delta)$. 

The rest of the paper is organized as follows. In Section~\ref{sec:overall_diff_privacy}, we review the notion of differential privacy and state our results for the Binomial mechanism. Motivated by the fact that the convergence of SGD can be reduced to the error in gradient estimate computation per-round, we 
formally describe the problem of \dme in Section~\ref{sec:distmean} and state our results in Section~\ref{sec:results}.

In Section \ref{sec:BINdescription}, we provide and analyze the implementation of the binomial mechanism in conjunction with quantization in the context of DME. The main idea is for each client to add noise drawn from an appropriately parameterized Binomial distribution to each quantized value before sending to the server. The server further subtracts the bias introduced by the noise to achieve an unbiased mean estimator. We further show in Section \ref{sec:rotation} that the rotation procedure proposed in \cite{suresh2016distributed} which reduces the MSE is helpful in reducing the additional error due to differential privacy. 

\section{Differential privacy}
\label{sec:overall_diff_privacy}
\subsection{Notation}
We start by defining the notion of differential privacy. Formally, given a set of data sets $\D$  provided with a notion of neighboring data sets $\mathcal{N}_{\D} \subset \D \times \D$ and a query function $f : \D \rightarrow \cX$, a mechanism $\M : \cX \rightarrow \O$ to release the answer of the query, is defined to be $(\epsilon, \delta)$ differentially private if for any measurable subset $S \subseteq \O$ and two neighboring data sets $(D_1, D_2) \in \mathcal{N}_{\D}$,
\begin{equation} \Pr\left(\M(f(D_1)) \in S\right) \leq e^{\epsilon}\Pr \left(\M(f(D_2)) \in S\right) + \delta.
\end{equation}
Unless otherwise stated, for the rest of the paper, we will assume the output spaces $\cX,\cO \subseteq \reals^d$. We consider the mean square error as a metric to measure the error of the mechanism $\M$. Formally,
\[
\cE(\M) \defeq \max_{D \in \D} \EE[\norm{\M(f(D)) - f(D)}^2_2].
\]
A key quantity in characterizing differential privacy for many mechanisms is the sensitivity of a query $f: \D \rightarrow \reals^d$ in a given norm $\ell_q$. Formally this is defined as
\begin{equation}
\label{eqn:deltaqdefn}  
\Delta_q \defeq \max_{(D_1, D_2) \in \mathcal{N}_{\D}} \norm{f(D_1) - f(D_2)}_q.
\end{equation}
The canonical mechanism to achieve $(\epsilon, \delta)$ differential privacy is the Gaussian mechanism $\M_{g}^{\sigma}$~\cite{dworkroth}:
\[
\M_g^{\sigma}(f(D)) \defeq f(D) + Z,
\]
where $Z \sim \normal(0,\sigma^2 \II_d)$. We now state the well-known privacy guarantee of the Gaussian mechanism. 
\begin{lemma}[~\cite{dworkroth}]
\label{lem:Gaussian_overall}
For any $\delta$, $\ell_2$ sensitivity bound $\Delta_2$, and $\sigma$ such that \[
\sigma \geq \Delta_2 \sqrt{2 \log \frac{1.25}{\delta}},
\]
$\M_{g}^{\sigma}$ is $(\frac{\Delta_2 }{\sigma} \sqrt{2 \log\frac{1.25}{\delta}},\delta)$ differentially private
\footnote{All logs are to base $e$ unless otherwise stated.} and the error is bounded by $
d \cdot \sigma^2.$
\end{lemma}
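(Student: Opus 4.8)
The error bound is immediate and I would dispatch it first: since $\M_g^\sigma(f(D)) - f(D) = Z$ with $Z \sim \normal(0,\sigma^2 \II_d)$, we get $\expect[\norm{\M_g^\sigma(f(D)) - f(D)}_2^2] = \expect[\norm{Z}_2^2] = d\sigma^2$ for every $D$, so $\cE(\M_g^\sigma) = d\sigma^2$.

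For the privacy claim, the plan is to reduce to one dimension and then bound a Gaussian tail. Fix neighboring $D_1, D_2$ and write $a \defeq f(D_1)$, $b \defeq f(D_2)$, so $\norm{a-b}_2 \le \Delta_2$. Because the law of $Z$ is invariant under orthogonal maps and the mechanism is translation-equivariant, the privacy loss on $(D_1,D_2)$ depends only on $\norm{a-b}_2$ and is non-decreasing in it; hence it suffices to take $a = 0$ and $b = \Delta_2 e_1$, after which only the first coordinate is relevant and we are comparing $\normal(0,\sigma^2)$ with $\normal(\Delta_2,\sigma^2)$ on $\reals$.

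Next I would introduce the privacy-loss random variable $L \defeq \ln\bigl(p_0(x)/p_{\Delta_2}(x)\bigr)$ for $x \sim \normal(0,\sigma^2)$, where $p_\mu$ denotes the $\normal(\mu,\sigma^2)$ density. A one-line computation gives $L = (\Delta_2^2 - 2\Delta_2 x)/(2\sigma^2)$, an affine image of a centered Gaussian, so $L \sim \normal\bigl(\Delta_2^2/(2\sigma^2),\ \Delta_2^2/\sigma^2\bigr)$. The standard reduction — for a measurable set $S$, split $\Pr[\M_g^\sigma(f(D_1)) \in S]$ according to whether $L \le \epsilon$, bound the "$L \le \epsilon$" part by $e^\epsilon \Pr[\M_g^\sigma(f(D_2)) \in S]$ pointwise and the complement by $\Pr[L > \epsilon]$ — shows that $\Pr[L > \epsilon] \le \delta$ implies $(\epsilon,\delta)$-differential privacy (the reversed inequality follows from the symmetry of the reduced instance). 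So everything comes down to choosing $\epsilon$ and verifying the tail bound. Setting $c \defeq \sqrt{2\log(1.25/\delta)}$ and $\epsilon \defeq \Delta_2 c/\sigma$, which is exactly the claimed privacy level, the hypothesis $\sigma \ge \Delta_2\sqrt{2\log(1.25/\delta)}$ is just $\epsilon \le 1$ rewritten. Standardizing $L$ gives $\Pr[L > \epsilon] = \Pr[\normal(0,1) > s]$ with $s = c - \Delta_2/(2\sigma) \ge c - 1/(2c)$, so $s^2 \ge c^2 - 1$; applying $\Pr[\normal(0,1) > s] \le \tfrac12 e^{-s^2/2}$ yields $\Pr[L > \epsilon] \le \tfrac12 e^{-(c^2-1)/2} = \tfrac{\sqrt e}{2.5}\,\delta \le \delta$, as required.

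I expect the only delicate point to be this last calibration: the constant $1.25$ and the restriction $\epsilon \le 1$ are tuned precisely so that the loss $s^2 \ge c^2 - 1$ leaves enough slack (using $\sqrt e / 2.5 < 1$), and one must separately check that $s \ge 0$, which holds for all $\delta$ below a universal constant — the only regime of interest. The dimension reduction and the good-set/bad-set decomposition are routine, though the latter should be written carefully, keeping track of the absolute continuity of the two Gaussian laws.
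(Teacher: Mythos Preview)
The paper does not prove this lemma; it simply cites it from Dwork--Roth, so there is no in-paper proof to compare against. Your argument is correct and is essentially the standard proof from the cited source: rotational invariance of the Gaussian reduces to a one-dimensional comparison, the privacy-loss variable is affine Gaussian, and the tail is controlled by the elementary bound $\Pr[\normal(0,1)>s]\le \tfrac{1}{2}e^{-s^2/2}$ together with the calibration $c=\sqrt{2\log(1.25/\delta)}$. The small caveat you flag about needing $s\ge 0$ (equivalently $\delta$ below a universal constant) is exactly the mild restriction under which the constant $1.25$ is tuned in Dwork--Roth.
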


\subsection{High probability sensitivity}

In this section we introduce a notion of high probability sensitivity which allows us to work randomized queries which do not have a worst case bound on sensitivity but have bounded sensitivity with high probability.  
Let $Q \defeq \{q_i \in \mathbb{N}\}$ represent a set of natural numbers and $\Delta_Q \defeq \{ \Delta_{q_i}\}$ represent a subset of real numbers. For two random vectors $v_1, v_2$, the event $\|v_1 - v_2\|_Q \leq \Delta_Q$ is defined as
\[ (\|v_1 - v_2\|_Q \leq \Delta_Q) \defeq \bigcup_{i} \;\;(\|v_1 - v_2\|_{q_i} \leq \Delta_{q_i})\]
\begin{definition}[$(\Delta_Q, \delta)$ sensitivity]
\label{defn:highprobsense}
  Given a set of integers $Q$ and values $\Delta_Q, \delta$, we call a randomized function $f: \mathcal{D} \rightarrow \mathcal{X}$, $(\Delta_Q, \delta)$ sensitive, if for any two neighboring data sets $D_1, D_2 \in \mathcal{N}_\mathcal{D}$, there exist coupled random variables $X_1, X_2 \in \mathcal{X}$ such that the marginal distributions of $X_1, X_2$ are identical to that of ${f}(D_1)$ and ${f}(D_2)$ and 
\begin{equation}
\label{eqn:couplingcond}
 \Pr_{X_1, X_2}(\|X_1 - X_2\|_Q \leq \Delta_Q) \geq 1 - \delta.
\end{equation}
\end{definition}
We show the following result for high-probability sensitivity and 
the proof is provided in  Appendix~\ref{sec:highprobsensitivity}.
\begin{lemma}
\label{lemma:highprobsensitivity}
  Let $\mathcal{M}: \mathcal{X} \rightarrow \mathcal{O}$ be an $(\epsilon,\delta)$ differentially private mechanism for sensitivity $\Delta_Q$ and let  ${f}:\mathcal{D} \rightarrow \mathcal{X}$ be a $(\Delta_Q, \delta')$ sensitive function. Then the composed mechanism ${\mathcal{M}}(f(D))$ is $(\epsilon, \delta + \delta')$ differentially private. 
\end{lemma}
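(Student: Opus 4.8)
The plan is a coupling argument of the kind standard for approximate differential privacy. Fix an arbitrary pair of neighboring datasets $(D_1,D_2)\in\mathcal{N}_\mathcal{D}$ and a measurable set $S\subseteq\mathcal{O}$; the goal is to show $\Pr(\mathcal{M}(f(D_1))\in S)\le e^{\epsilon}\Pr(\mathcal{M}(f(D_2))\in S)+\delta+\delta'$. First I would invoke Definition~\ref{defn:highprobsense} to obtain coupled random variables $X_1,X_2$ on $\mathcal{X}$ whose marginals match $f(D_1)$ and $f(D_2)$, and for which the ``good event'' $G\defeq\{\|X_1-X_2\|_Q\le\Delta_Q\}$ satisfies $\Pr(G)\ge 1-\delta'$. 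Since the internal coins of $\mathcal{M}$ are independent of the data (hence of the coupling), $\mathcal{M}(f(D_j))$ has the same law as $\mathcal{M}(X_j)$, so it suffices to bound $\Pr(\mathcal{M}(X_1)\in S)$ in terms of $\Pr(\mathcal{M}(X_2)\in S)$.

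Next I would split on $G$: $\Pr(\mathcal{M}(X_1)\in S)\le\Pr(\mathcal{M}(X_1)\in S,\,G)+\Pr(G^c)\le\Pr(\mathcal{M}(X_1)\in S,\,G)+\delta'$. Writing $\mu$ for the joint law of $(X_1,X_2)$ and $\kappa(x)\defeq\Pr(\mathcal{M}(x)\in S)$ for the mechanism's output kernel (a measurable function of $x$ by definition of a mechanism), the first term equals $\int_G \kappa(x_1)\,d\mu(x_1,x_2)$, using independence of $\mathcal{M}$'s coins to reduce the conditional probability given $(X_1,X_2)=(x_1,x_2)$ to $\kappa(x_1)$. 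On $G$ the realized inputs satisfy $\|x_1-x_2\|_Q\le\Delta_Q$, so the hypothesis that $\mathcal{M}$ is $(\epsilon,\delta)$-DP at sensitivity $\Delta_Q$ gives the pointwise bound $\kappa(x_1)\le e^{\epsilon}\kappa(x_2)+\delta$ for every such pair. Integrating this over $G$ and then enlarging the domain of integration of the nonnegative term $\int\kappa(x_2)\,d\mu$ from $G$ to all of $\mathcal{X}\times\mathcal{X}$ yields $\Pr(\mathcal{M}(X_1)\in S,\,G)\le e^{\epsilon}\Pr(\mathcal{M}(X_2)\in S)+\delta\,\Pr(G)\le e^{\epsilon}\Pr(\mathcal{M}(X_2)\in S)+\delta$. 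Recombining with the $\delta'$ coming from $G^c$ and using $\Pr(\mathcal{M}(X_2)\in S)=\Pr(\mathcal{M}(f(D_2))\in S)$ gives exactly $(\epsilon,\delta+\delta')$-DP.

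The argument is essentially bookkeeping, so there is no serious obstacle; the points that warrant a sentence of care are: (i) making explicit that $\mathcal{M}$'s randomness is independent of the coupling, which is what lets conditioning on $(X_1,X_2)$ collapse $\mathcal{M}(X_j)$ to $\mathcal{M}(\cdot)$ evaluated at the realized point; (ii) the direction of the inequality when dropping the restriction to $G$ — legitimate because it only enlarges the integral of a nonnegative quantity appearing with a $+$ sign, while the leftover $\delta\,\Pr(G)$ is at most $\delta$; and (iii) the fact that $\{\|X_1-X_2\|_Q\le\Delta_Q\}$ is, per the paper's notation, the \emph{union} of the per-norm events, so ``$(\epsilon,\delta)$-DP for sensitivity $\Delta_Q$'' must be read as $\mathcal{M}$ being $(\epsilon,\delta)$-DP on every pair of inputs lying in that union — under which reading the pointwise bound used on $G$ is precisely the stated hypothesis on $\mathcal{M}$.
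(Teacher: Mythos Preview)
Your proposal is correct and follows essentially the same approach as the paper's own proof: invoke the coupling from Definition~\ref{defn:highprobsense}, split the integral over the joint law of $(X_1,X_2)$ according to whether the good event $\{\|X_1-X_2\|_Q\le\Delta_Q\}$ holds, apply the $(\epsilon,\delta)$-DP of $\mathcal{M}$ pointwise on the good part, and bound the bad part by $\delta'$. Your write-up is in fact a bit more careful than the paper's (you make explicit the independence of $\mathcal{M}$'s coins from the coupling and the direction of the inequality when enlarging the integration domain), but the argument is the same.
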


\section{Binomial Mechanism}
 We now define the Binomial mechanism for the case when the output space $\cX$ of the query $f$ is $\mathbb{Z}^d$. The Binomial mechanism is parameterized by three quantities $N,p,s$ where $N \in \mathbb{N}, p \in (0,1)$, and quantization scale $s = 1/j$ for some $j \in \mathbb{N}$ and is given by
\begin{equation}
\label{eqn:binommechdefn}
  \M_b^{N,p,s}(f(D)) \defeq f(D) + (Z - Np)\cdot s,
\end{equation}
where for each coordinate $i$, $Z_i \sim \Bin(N,p)$ and independent.
One dimensional binomial mechanism was introduced by~\cite{dwork2006our} for the case when $p = 1/2$. We analyze the mechanism for the general $d$-dimensional case and for any $p$. This analysis is involved as the Binomial mechanism is not rotation invariant. By carefully exploiting the local rotation invariant structure near the mean, we show that:
\begin{theorem}
\label{thm:discbinomial}
For any $\delta$, parameters $N, p, s$ and sensitivity bounds $\Delta_1, \Delta_2, \Delta_{\infty}$ such that
\[
Np(1-p) \geq \max \left(23\log(10d/\delta), 2 \Delta_\infty / s \right),
\]
the Binomial mechanism is $(\epsilon, \delta)$ differentially private for
\begin{equation}
\label{eqn:binomialguarantee}
\epsilon = \frac{\Delta_2\sqrt{2 \log \frac{1.25}{\delta}}}{s\sqrt{Np(1-p)}} 
+ \frac{\Delta_2c_p \sqrt{\log \frac{10}{\delta}} + \Delta_1 b_p}{sNp(1-p)(1-\delta/10)}
+  \frac{\frac{2}{3}\Delta_\infty  \log \frac{1.25}{\delta}
+\Delta_\infty d_p \log \frac{20d}{\delta} \log \frac{10}{\delta}}{sNp(1-p)},
\end{equation}
where $b_p$, $c_p,$ and $d_p$ are defined in ~\eqref{eq:bp}, ~\eqref{eq:cp}, and ~\eqref{eq:dp} respectively, and for $p=1/2$, $b_p = 1/3$, $c_p = 5/2$, and $d_p = 2/3$. The error of the mechanism is 
\[
d  \cdot s^2 \cdot N p (1-p).
\]
\end{theorem}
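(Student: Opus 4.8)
The plan is to control the privacy loss random variable of the Binomial mechanism and then invoke the standard equivalence between $(\epsilon,\delta)$-differential privacy and a high-probability bound on this variable. Fix neighbouring data sets $D_1,D_2$ and set $v\defeq f(D_1)-f(D_2)$; since the output lattice is $s\ZZ^d$, the vector $\Delta\defeq v/s$ is integer-valued with $\norm{\Delta}_1=\Delta_1/s$, $\norm{\Delta}_2=\Delta_2/s$, $\norm{\Delta}_\infty=\Delta_\infty/s$. Writing an output as $y=f(D_1)+(z-Np)s$ with $z$ in the support of $\Bin(N,p)^{\otimes d}$, the density ratio factorises over coordinates, so the privacy loss is
\[
L(z)\;=\;\sum_{i=1}^d\Big(\ln\Pr[Z_i=z_i]\;-\;\ln\Pr[Z_i=z_i+\Delta_i]\Big),\qquad z_i\sim\Bin(N,p)\ \text{independent.}
\]
It suffices to produce an event $\cE$ with $\Pr_z[\cE]\ge 1-\delta$ on which $L(z)\le\epsilon$ (in particular $z_i+\Delta_i$ must then lie in $\{0,\dots,N\}$); replacing $\Delta$ by $-\Delta$ handles the reverse ordering of $(D_1,D_2)$, and since the stated $\Delta_q$ are worst-case sensitivities, no coupling argument is needed here.

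First I would introduce the \emph{good event} $\cE=\{\,|z_i-Np|\le r\ \text{for all }i\,\}$ with $r\asymp\sqrt{Np(1-p)\log(20d/\delta)}$. The hypothesis $Np(1-p)\ge 23\log(10d/\delta)$ is calibrated precisely so that a Bernstein tail bound for each $\Bin(N,p)$ coordinate, together with a union bound over the $d$ coordinates, gives $\Pr[\cE^c]\le\delta/10$; combined with $Np(1-p)\ge 2\Delta_\infty/s$ it also ensures that on $\cE$ one has $z_i+\Delta_i\in\{0,\dots,N\}$ and that all the Taylor and Stirling expansions below converge with controllable remainders. Everything from here on is conditional on $\cE$, which is the source of the $(1-\delta/10)$ factors in \eqref{eqn:binomialguarantee}.

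On $\cE$ I would apply Stirling's formula to $\ln\Pr[Z_i=k]$ and Taylor-expand around the mode $k=Np$, where the first derivative vanishes and the second equals $-1/(Np(1-p))$. This is exactly the ``local rotation invariance'' of the Binomial: to second order the law of $Z$ agrees with $\normal(Np,\,Np(1-p)\,\mathbf{I})$, whose privacy loss is rotation-invariant. The expansion yields $L(z)=\innerprod{\Delta}{Z-Np}/(Np(1-p))+R(z)$, where $R$ collects the cubic and higher Taylor terms, the curvature term $\tfrac12\norm{\Delta}_2^2/(Np(1-p))$ (rendered lower order by the hypotheses, just as the analogous term is absorbed in \lemmaref{lem:Gaussian_overall}), and the Stirling remainder. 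The leading term is a weighted sum of independent centred Bernoulli increments with total variance $\norm{\Delta}_2^2Np(1-p)$ and per-term bound $\norm{\Delta}_\infty$; a Bennett/Bernstein inequality, sharpened so that the Gaussian part matches the constant of the Gaussian mechanism, gives with probability $1-\delta/10$
\[
\frac{\innerprod{\Delta}{Z-Np}}{Np(1-p)}\;\le\;\frac{\Delta_2\sqrt{2\log(1.25/\delta)}}{s\sqrt{Np(1-p)}}\;+\;\frac{\tfrac23\Delta_\infty\log(1.25/\delta)}{sNp(1-p)},
\]
i.e. the first term of \eqref{eqn:binomialguarantee} plus the first part of its third term. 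For $R$, the principal new contribution has the shape $\tfrac{c(p)}{(Np(1-p))^2}\sum_i\Delta_i(z_i-Np)^2$ up to smaller pieces: bounding its mean by $\norm{\Delta}_1Np(1-p)$ produces the $\Delta_1 b_p$ summand, and controlling its fluctuation by Bernstein (using $\var((z_i-Np)^2)\asymp(Np(1-p))^2$) produces the $\Delta_2 c_p\sqrt{\log(10/\delta)}$ summand — jointly the second term of \eqref{eqn:binomialguarantee}. The surviving higher Taylor orders, the Stirling error, and the cross terms are bounded deterministically on $\cE$ using $|z_i-Np|\le r$ and $\Delta_\infty/s\le Np(1-p)/2$, which is where the residual $\Delta_\infty d_p\log(20d/\delta)\log(10/\delta)$ piece of the third term comes from. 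Summing the failure probabilities ($\delta/10$ from $\cE$ and from each concentration step, at most $\delta$ total) and collecting terms gives the claimed $\epsilon$. The error bound is immediate: $\expect\norm{\M_b^{N,p,s}(f(D))-f(D)}_2^2=s^2\sum_i\var(Z_i)=d\,s^2\,Np(1-p)$.

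The hard part is the bookkeeping in the previous paragraph. One must track the $p$-dependence carefully — the cubic Taylor term vanishes at $p=1/2$, so $b_p,c_p,d_p$ are in fact governed by the next non-vanishing term and by the discreteness (Euler--Maclaurin) corrections, which is why they equal $1/3,5/2,2/3$ rather than $0$ there — and one must verify that every term of $R$ not eliminated by concentration genuinely sits at the $1/(Np(1-p))$ scale, i.e. that the two hypotheses on $Np(1-p)$ are strong enough, while preventing the logarithmic factors ($\log(1.25/\delta)$ from the Gaussian-type tail, $\log(10/\delta)$ from the Bernstein steps for $R$, $\log(20d/\delta)$ from the union bound defining $\cE$) from compounding. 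Equivalently, the crux is to pin down a region of the lattice carrying $1-O(\delta)$ mass on which the Binomial behaves like a spherical Gaussian to the required order, and to show that discarding the complement costs only $\delta$.
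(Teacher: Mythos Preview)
Your high-level architecture is exactly the paper's: identify a good set of mass $\ge 1-\delta$, on that set split the log-likelihood ratio into a linear sum $\sum_j\Delta_j(z_j-Np)/(Np(1-p))$ controlled by a Bernstein-type inequality (yielding the first term of \eqref{eqn:binomialguarantee} and the $\tfrac23\Delta_\infty\log(1.25/\delta)$ piece), and a remainder whose mean contributes the $\Delta_1 b_p$ piece and whose fluctuation contributes the $\Delta_2 c_p$ and $\Delta_\infty d_p$ pieces. So the strategy is right.

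The execution differs from the paper's in one place that matters. You reach the decomposition by applying Stirling to $\ln\Pr[Z_i=k]$ and Taylor-expanding; this produces the curvature term $\tfrac12\norm{\Delta}_2^2/(Np(1-p))=\Delta_2^2/(2s^2Np(1-p))$, which you say is ``rendered lower order by the hypotheses, just as the analogous term is absorbed in \lemmaref{lem:Gaussian_overall}.'' That analogy fails: \lemmaref{lem:Gaussian_overall} carries the explicit hypothesis $\sigma\ge\Delta_2\sqrt{2\log(1.25/\delta)}$, i.e.\ $\epsilon\le 1$, and it is precisely this that lets the quadratic term be folded into the leading one. Theorem~\ref{thm:discbinomial} has no analogous assumption---its two hypotheses bound $Np(1-p)$ only in terms of $\log(d/\delta)$ and $\Delta_\infty/s$, not $\Delta_2/s$---and one checks directly that $\Delta_2^2/(2s^2Np(1-p))$ is not dominated by any summand of \eqref{eqn:binomialguarantee} in general. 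So as written your route proves the theorem with one extra additive term in $\epsilon$, not the theorem as stated.

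The paper sidesteps this entirely: instead of Stirling it uses the elementary algebraic inequality (its Lemma~\ref{lemma:probboundbinom})
\[
\frac{\Pr(Z_i=z_i-\Delta_i)}{\Pr(Z_i=z_i)}\ \le\ \exp\!\left(\Delta_i\,\log\frac{(z_i+1)(1-p)}{(N-z_i+1)p}\right),
\]
which is \emph{linear} in $\Delta_i$ and hence generates no $\norm{\Delta}_2^2$ term at all. The remainder $\log\tfrac{(z_i+1)(1-p)}{(N-z_i+1)p}-\tfrac{z_i+1}{Np}+\tfrac{N-z_i+1}{N(1-p)}$ is then handled by the bound $|\log(1+u)-u|\le c u^2$ together with an Efron--Stein computation of its variance (Lemma~\ref{lem:new_log_bound}); this is where the constants $b_p,c_p,d_p$ actually come from, and why they do \emph{not} vanish at $p=1/2$ (your attribution of them to a vanishing cubic Taylor term is off---$b_{1/2}=1/3$ comes from the mean of the squared deviation, not from a fourth-order correction). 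If you want to salvage the Stirling route, you would need either to add the hypothesis $s\sqrt{Np(1-p)}\gtrsim\Delta_2\sqrt{\log(1/\delta)}$ or to bound the ratio, rather than each factor separately, by something first-order in $\Delta_i$---which is exactly what the paper's lemma does.
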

The proof is given in Appendix~\ref{app_sec:binomial}.
We make some remarks regarding the design and the guarantee for the Binomial Mechanism. Note that the privacy guarantee for the Binomial mechanism depends on all three sensitivity parameters $\Delta_2, \Delta_{\infty}, \Delta_{1}$ as opposed to the Gaussian mechanism which only depends on $\Delta_2$. 
The $\Delta_1$ and $\Delta_\infty$ terms can be seen as the added complexity due to discretization.

Secondly setting $s=1$ (i.e. providing no scale to the noise) in the expression \eqref{eqn:binomialguarantee}, it can be readily seen that the terms involving $\Delta_1$ and $\Delta_2$ scale differently with respect to the variance of the noise. This motivates the use of the accompanying quantization scale $s$ in the mechanism. Indeed it is possible that the resolution of the integer that is provided by the Binomial noise could potentially be too large for the problem leading to worse guarantees. In this setting, the quantization parameter $s$ helps normalize the noise correctly. Further, it can be seen as long as the variance of the random variable $s\cdot Z$ is fixed, increasing $Np(1-p)$ and decreasing $s$ makes the Binomial mechanism closer to the Gaussian mechanism. Formally, if we let $\sigma = s \sqrt{N p(1-p)}$ and $s \leq \sigma/( c \sqrt{ d})$, then using the Cauchy-Schwartz inequality, the $\epsilon$ guarantee \eqref{eqn:binomialguarantee} can be rewritten as 
\[
\epsilon =  {\Delta_2}/{\sigma}\sqrt{2\log {1.25}/{\delta}}  \left(1 + \cO\left({1}/{c} \right) \right).
\]

The variance of the Binomial distribution is $Np(1-p)$ and the leading term in $\epsilon$ matches exactly
 the $\epsilon$ term in Gaussian mechanism. 
Furthermore, if $s$ is $o(1/\sqrt{d})$, then this mechanism is very similar to the Gaussian mechanism. This result agrees with the Berry-Esseen type Central limit theorems for the convergence of one dimensional Binomial distribution to the Gaussian distribution.

  In Figure \ref{fig:binomialvsGaussian}, we plot the error vs $\epsilon$ for Gaussian and Binomial mechanism. Observe that as scale is reduced, error vs privacy trade-off for Binomial mechanism approaches that of Gaussian mechanism. 

\begin{figure}[tb]\centering
   \begin{minipage}{0.32\textwidth}
     \includegraphics[width=\linewidth]{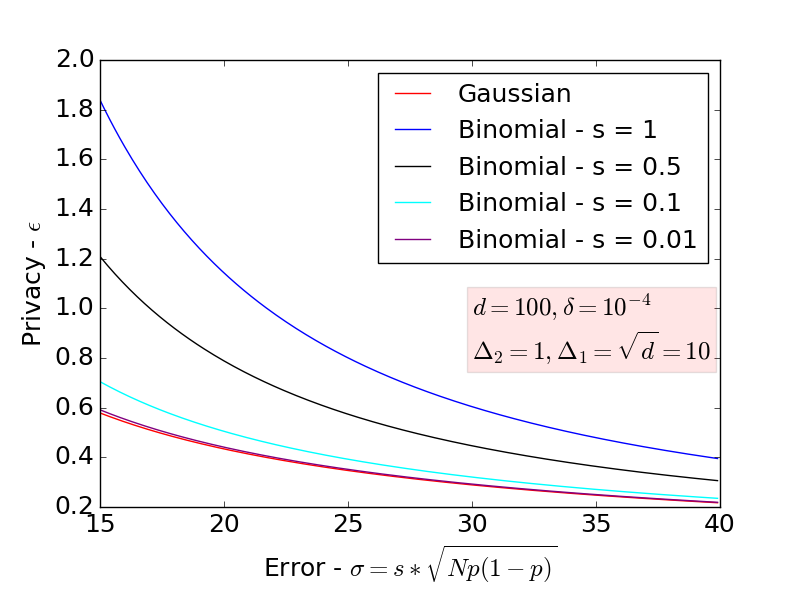}
     \caption{Comparison of error vs privacy for Gaussian and Binomial mechanism at different scales}\label{fig:binomialvsGaussian}
   \end{minipage}
   \hspace{+0.5cm}
   \begin {minipage}{0.6\textwidth}
     \subfigure[$\epsilon=4.0$]{\includegraphics[width=0.45\textwidth]{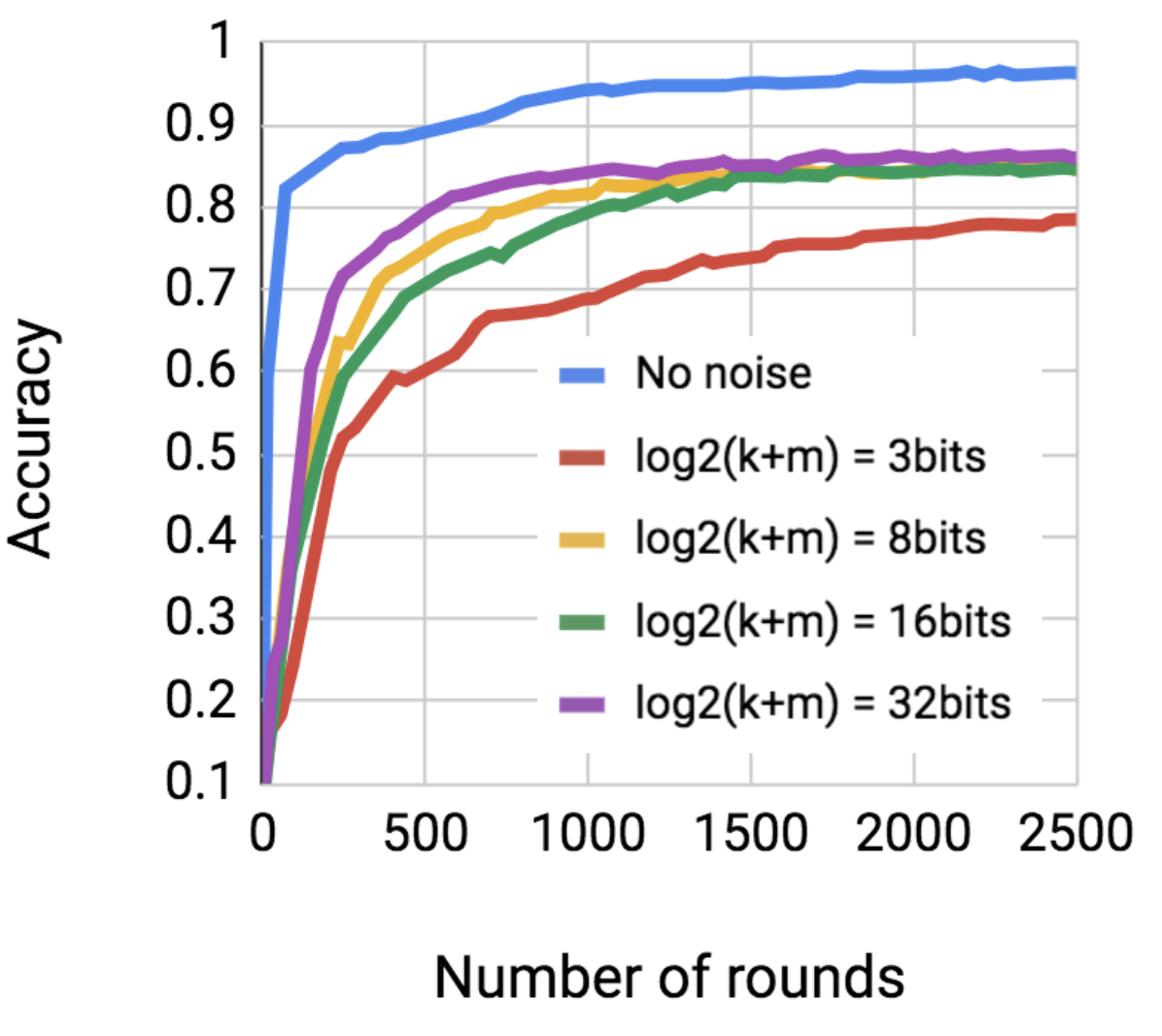}}
\subfigure[$\epsilon=2.0$]{\includegraphics[width=0.45\textwidth]{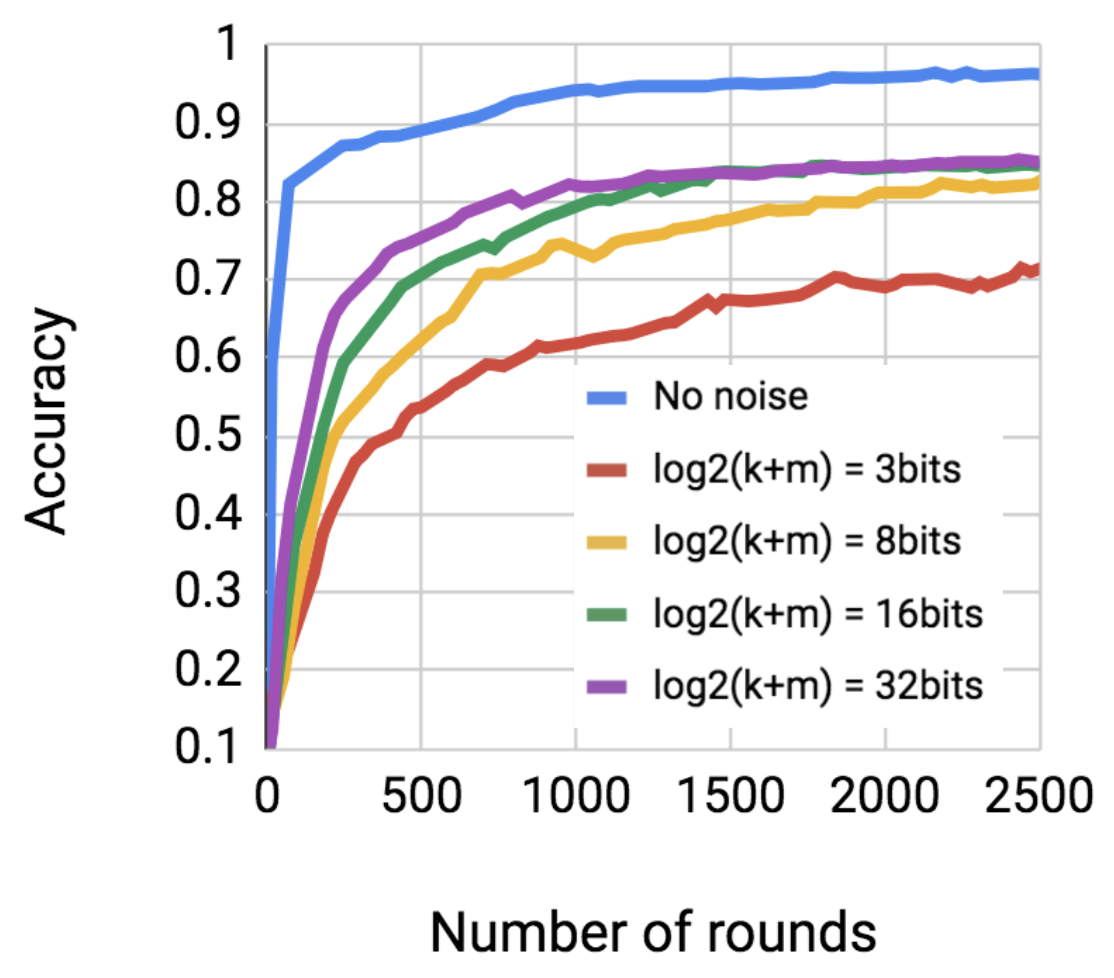}}
\caption{cpSGD with rotation on the  infinite MNIST dataset. $k$ is the number of quantization levels, and $m$ is the parameter of the binomial noise ($p = 0.5$, $s = 1$).
The baseline is without quantization and differential privacy. $\delta = 10^{-9}$.}
\label{fig:experiment}
\end{minipage}
\end{figure}

\section{Distributed mean estimation (DME)}
\label{sec:distmean}
We have related the SGD convergence rate to the MSE in approximating the gradient at each step in Corollary~\ref{cor:qsgd}. Eq.~\eqref{eq:com} relates the communication cost of SGD to the communication cost of estimating gradient means. Advanced composition theorem (Thm. 3.5 \cite{kairouz2017composition}) or moments accounting~\cite{abadi2016deep} can be used to relate the privacy guarantee of SGD to that of gradient mean estimate at each instance $t$. We also note that in SGD, we often sample the clients, standard privacy amplification results via sampling~\cite{abadi2016deep}, can be used to get tighter bounds in this case. 

Therefore, akin to \cite{suresh2016distributed}, in the rest of the paper we just focus on the MSE and privacy guarantees of DME. The results for synchronous distributed GD follow from Corollary~\ref{cor:qsgd} (convergence), advanced composition theorem (privacy), and Eq.~\eqref{eq:com} (communication).

Formally, the problem of \dme is defined as given $n$ vectors $X \defeq \{X_1 \ldots X_n\}$ where $X_i \in \reals^d$ is on client $i$, we wish to compute the mean 
\[
\bar{X} = \frac{1}{n}\sum_{i = 1}^{n} X_i
\]
at a central server. For gradient descent at each round $t$, $X_i$ is set to $g^t_i$. \dmecap is a fundamental building block for many distributed learning algorithms including distributed PCA/clustering~\cite{Lloyd82}.

While analyzing private \dme we assume that each vector $X_i$ has bounded $\ell_2$ norm, i.e. $\|X_i\| \leq \bound$. The reason to make such an assumption is to be able to define and analyze the privacy guarantees and is often enforced in practice by employing gradient clipping at each client. We note that this assumption appears in previous works on gradient descent and differentially private gradient descent (e.g. \cite{abadi2016deep}). Since our results also hold for all gradients without any statistical assumptions, we get desired convergence results and privacy results for SGD.

\subsection{Communication protocol}
Our proposed communication algorithms are simultaneous and independent, i.e., the clients independently send data to the server at the same time. We allow the use of both private and public randomness. Private randomness refers to random values generated by each client separately, and public randomness refers to a sequence of
random values that are shared among all parties\footnote{Public randomness can be emulated by the server communicating a random seed}.

 Given $n$ vectors $X \defeq \{X_1 \ldots X_n\}$ where $X_i \in \reals^d$ resides on a client $i$. 
In any independent communication protocol, each client
transmits a function of $X_i$ (say $q(X_i)$), and a central server
estimates the mean by some function of $q(X_1),q(X_2),\ldots,
q(X_n)$. Let $\pi$ be any such protocol and let $\cC_i(\pi, X_i)$ be
the expected number of transmitted bits by the $i$-th client during protocol
$\pi$, where throughout the paper, expectation is over the randomness in
protocol $\pi$.

The total number of bits transmitted by all clients with the protocol $\pi$ is 
\[
\cC(\pi, X^n_1) \ed  \sum^n_{i=1} \cC_i(\pi, X_i).
\]
Let the estimated mean be $\hat{\bar{X}}$. For a protocol $\pi$, the MSE of the estimate is
\[
\cE(\pi, X^n_1) = \expect\left[\norm{\hat{\bar{X}} - \bar{X}}^2_2\right].
\]
We note that bounds on $\cE((\pi, X^n_1)$, translates to bounds on gradients estimates in Eq.~\eqref{eq:gvarq} and result in convergence guarantees via Corollary~\ref{cor:qsgd}.
\subsection{Differential privacy}
\label{sec:diff_privacy}

To state the privacy results for DME, we define the notion of data sets and neighbors as follows. 
A dataset is a collection of vectors  $X = \{X_1, \ldots X_n\}$.
The notion of neighboring data sets typically corresponds to those differing only on the information of one user, i.e. $X, X_{\otimes i}$ are neighbors if they differ in one vector.

Note that this notion of neighbors for \dme in the context of distributed gradient descent translates to two data sets 
\[ F = f_1,f_2,\ldots f_n \text{ and } F' = f'_1,f'_2,\ldots f'_n\]
 being neighbors if they differ in one function $f_i$ 
and corresponds to guaranteeing privacy for individual client's data. 
The bound $\norm{X_i}_2 \leq D$ translates to assuming $\|g^t_i\| \leq D$, ensured via gradient clipping.

\section{Results for distributed mean estimation (DME)}
\label{sec:results}
In this section we describe our algorithms, the associated MSE, and the privacy guarantees in the context of DME. First, we first establish a baseline by stating the results for implementing the Gaussian mechanism by adding Gaussian noise on each client vector.
\subsection{Gaussian protocol}
In the Gaussian mechanism, each client sends vector
\[
Y_i = X_i + Z_i,
\]
where $Z_i$s are i.i.d distributed as $\normal(0, \sigma^2 \II_d)$. The server estimates the mean by 
\[\hat{\bar{X}} = \frac{1}{n} \cdot \sum_{i = 1}^{n} Y_i.\]
We refer to this protocol as $\pi_g$. Since $\sum_{i=1}^n Z_i/ n$ is distributed as $\normal(0, {\sigma^2} \II_d / n)$ the above mechanism is equivalent to applying the Gaussian mechanism on the output with variance $\sigma^2/n$. 
Since changing any of the $X_i$'s changes the norm of $\bar{X}$ by at most $2\bound/n$, the following theorem follows directly from Lemma~\ref{lem:Gaussian_overall}.

\begin{theorem}
\label{thm:gaussian}
Under the Gaussian mechanism, the mean estimate is unbiased and communication cost is $n\cdot d$ reals. Moreover, for any $\delta$ and $\sigma \geq \frac{2D}{\sqrt{n}} \cdot \sqrt{2 \log {1.25}/{\delta}}$, it is $(\epsilon,\delta)$ differentially private for 
\[
\epsilon =\frac{2D}{\sqrt{n}\sigma} \sqrt{2 \log \frac{1.25}{\delta}} \;\;\text{ and }\;\; \cE(\pi_g, X) = \frac{d\sigma^2}{n},
\]

\end{theorem}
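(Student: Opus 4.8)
The plan is to reduce the entire statement to a single application of \lemmaref{lem:Gaussian_overall}. First I would rewrite the server's estimate as
\[
\hat{\bar{X}} = \frac{1}{n}\sum_{i=1}^{n} Y_i = \bar{X} + \frac{1}{n}\sum_{i=1}^{n} Z_i .
\]
Since the $Z_i$ are i.i.d.\ $\normal(0,\sigma^2 \II_d)$, the aggregate noise $\frac{1}{n}\sum_{i} Z_i$ is distributed as $\normal\big(0,(\sigma^2/n)\,\II_d\big)$. Hence the protocol $\pi_g$ is exactly the Gaussian mechanism $\M_g^{\sigma/\sqrt{n}}$ applied to the query $f(X) = \bar{X}$. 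Unbiasedness is then immediate from $\EE[Z_i] = 0$, and the communication claim is trivial since each of the $n$ clients transmits the $d$ real coordinates of $Y_i$.

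Next I would bound the $\ell_2$ sensitivity of the query $\bar{X}$ under the neighboring relation defined above (where $X$ and $X_{\otimes i}$ differ only in the $i$-th vector): writing $X_i'$ for the replaced vector, $\bar{X} - \bar{X}_{\otimes i} = \frac{1}{n}(X_i - X_i')$, so $\norm{\bar{X} - \bar{X}_{\otimes i}}_2 \le \frac{1}{n}\big(\norm{X_i}_2 + \norm{X_i'}_2\big) \le \frac{2\bound}{n}$, using the clipping assumption $\norm{X_i}_2 \le \bound$. Thus $\Delta_2 \le 2\bound/n$.

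Finally I would substitute $\sigma' = \sigma/\sqrt{n}$ and $\Delta_2 = 2\bound/n$ into \lemmaref{lem:Gaussian_overall}: the hypothesis $\sigma' \ge \Delta_2\sqrt{2\log(1.25/\delta)}$ becomes $\sigma \ge \frac{2\bound}{\sqrt{n}}\sqrt{2\log(1.25/\delta)}$, which is exactly the theorem's assumption, and the lemma's conclusion gives $(\epsilon,\delta)$-differential privacy with $\epsilon = \frac{\Delta_2}{\sigma'}\sqrt{2\log(1.25/\delta)} = \frac{2\bound}{\sqrt{n}\,\sigma}\sqrt{2\log(1.25/\delta)}$. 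For the error, independence of the $Z_i$ gives $\cE(\pi_g, X) = \EE\big[\norm{\tfrac{1}{n}\sum_{i} Z_i}_2^2\big] = \frac{1}{n^2}\sum_{i}\EE\norm{Z_i}_2^2 = \frac{1}{n^2}\cdot n\cdot d\sigma^2 = \frac{d\sigma^2}{n}$.

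There is no genuine obstacle here: the result is a direct corollary of \lemmaref{lem:Gaussian_overall}. The only point requiring a little care is keeping track of the two independent factors of $n$ — one from averaging the noise, which shrinks its per-coordinate variance to $\sigma^2/n$, and one from the sensitivity of an average, which is $2\bound/n$ rather than $2\bound$ — so that they combine correctly into the stated $\epsilon$ and $\cE(\pi_g, X)$.
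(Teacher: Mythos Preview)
Your proposal is correct and follows exactly the paper's own reasoning: the paper does not give a separate proof but simply notes (in the paragraph preceding the theorem) that $\tfrac{1}{n}\sum_i Z_i \sim \normal(0,(\sigma^2/n)\II_d)$, that the $\ell_2$ sensitivity of $\bar{X}$ is at most $2\bound/n$, and that the result then follows directly from \lemmaref{lem:Gaussian_overall}. Your write-up spells out precisely these steps and tracks the two factors of $n$ correctly.
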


We remark that real numbers can be quantized to $\cO(\log dn/\epsilon\delta)$ bits with insignificant effect to privacy\footnote{Follows by observing that quantizing all values to $1/\text{poly}(n,d,1/\epsilon, \log 1/\delta)$ accuracy ensures minimum loss in privacy. In practice this is often implemented using 32 bits of quantization via float representation.}.
However this is asymptotic and can be prohibitive in practice~\cite{konevcny2016federated}, where we have a small fixed communication budget and $d$ is of the order of millions. A natural way to reduce communication cost is via quantization, where each client quantizes $Y_i$s before transmitting. However how privacy guarantees degrade due to quantization of the Gaussian mechanism is  hard to analyze particularly under aggregation. Instead we propose to use the Binomial mechanism which we describe next.
\subsection{Stochastic $k$-level quantization + Binomial mechanism}
\label{sec:BINdescription}
We now define the mechanism $\binomprotocol$ based on $k$-bit stochastic quantization $\pi_{sk}$ proposed in \cite{suresh2016distributed} composed with the Binomial mechanism. It will be parameterized by $3$ quantities $k, m, p$.

First, the server sends $X^{\max}$ to all the clients, with the hope that
for all $i,j$, $-X^{\max} \leq X_i(j) \leq X^{\max}$.  The clients then clip each coordinate of their vectors to the range $[-X^{\max}, X^{\max}]$.
For every integer $r$ in the range $[0,k)$,  let $B(r)$represent a bin (one for each $r$), i.e.
\begin{equation}
\label{eqn:binomquantbins}
B(r) \ed -X^{\max} +  \frac{2 r X^{\max}}{k-1}, 
\end{equation}
The algorithm
quantizes each coordinate into one of the bins stochastically and adds scaled Binomial noise. Formally client $i$ computes the following quantities for every $j$
\begin{equation}
\label{eqn:binomquantui}
\binomquantui_i(j) = 
\begin{cases}
    B(r+1)  & \text{w.p. } \frac{X_i(j) - B(r)}{B(r+1) - B(r)} \\
    B(r) & \text{otherwise.}
\end{cases} \quad\quad\quad
Y_i(j) = 
\binomquantui_i(j) + \frac{2X^{\max}}{k-1} \cdot \binomnoise_i(j).
\end{equation}
where $r$ is such that $X_i(j) \in [B(r), B(r+1)]$ and $T_i(j) \sim \Bin(m,p)$. The client sends $Y_i$ to the server. The server now estimates $\bar{X}$ by
\begin{equation}
\label{eqn:binomquantestimator}
\binomquantestimator = \frac{1}{n} \sum^n_{i=1} \left( Y_i - \frac{2X^{\max}mp}{k-1} \right). 
\end{equation}
If $\forall j$, $X_i(j) \in [-X^{\max}, X^{\max}]$, then 
\[\expect\left[Y_i - \frac{2X^{\max}mp}{k-1}\right] = X_i,\]
 and $\binomquantestimator$ will be an unbiased estimate of the mean. 
With no prior information on $X^{\max}$, the natural choice is to set $X^{\max} = D$. With this value of $X^{\max}$ we characterize the MSE, sensitivity, and communication complexity of the Binomial mechanism below. To characterize the sensitivity of $\hat{\bar{X}}_{\binomprotocol}$, we need few definitions. For scalars $D, X^{\max}$, let 
\begin{align}
&\Delta_\infty(X^{\max}, D) \stackrel{\text{def}}{=} k + 1 \nonumber \\
&\Delta_1(X^{\max}, D) \stackrel{\text{def}}{=}  \frac{\sqrt{d}D}{q} + \sqrt{\frac{2\sqrt{d}D \log (2/\delta)}{q}} + \frac{4}{3} \log \frac{2}{\delta} \nonumber \\
&\Delta_2(X^{\max}, D) \stackrel{\text{def}}{=} \frac{D}{q} + \sqrt{\Delta_1 + \sqrt{\frac{2\sqrt{d}D \log (2/\delta)}{q}} }, \label{eq:d1d2d3}
\end{align}
where $q = X^{\max}/(k-1)$. We note that quantities $k,\delta, \epsilon, d$ are omitted from the LHS of the equations for the ease of notation. Combined with Theorem~\ref{thm:discbinomial}, this yields the privacy guarantees for the binomial mechanism.
\begin{theorem}
\label{thm:MSEbinomial}
If $X^{\max} = \bound$, then the mean estimate is unbiased and 
    \[
 \cE\left(\binomprotocol, X^n \right) \leq  \frac{dD^2}{n(k-1)^2} + \frac{d}{n}\cdot\frac{4mp(1-p) D^2}{(k-1)^2},
    \]
Furthermore if
     $mnp(1-p) \geq \max \left(23\log(10d/\delta), 2\Delta_{\infty}(\bound, X^{max}) \right)$,
  then for any $\delta$, $\hat{\bar{X}}_{\binomprotocol}$ is $(\epsilon, 2\delta)$ differentially private where $\epsilon$ is given by Theorem \ref{thm:discbinomial} with sensitivity parameters $\{\Delta_1(X^{max}, D), \Delta_2(X^{max}, D), \Delta_\infty(X^{max}, D)\}$ (Eq.~\eqref{eq:d1d2d3}).  Furthermore, 
\[
\cC(\binomprotocol, X^n) = n \cdot (d \log_2(k+m) + \tilde\cO(1) ).\footnote{$\tilde{\cO}$ is used to denote poly-logarithmic factors.}
\]    
\end{theorem}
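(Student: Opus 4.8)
Because $X^{\max}=\bound$ and $\|X_i\|_2\le\bound$, every coordinate of each $X_i$ already lies in $[-X^{\max},X^{\max}]$, so the clipping step in $\binomprotocol$ is vacuous and the stochastic rounding in \eqref{eqn:binomquantui} obeys $\expect[\binomquantui_i(j)]=X_i(j)$; together with $\expect[\binomnoise_i(j)]=mp$ this gives $\expect[Y_i-\tfrac{2X^{\max}mp}{k-1}]=X_i$, hence $\expect[\binomquantestimator]=\bar X$. For the MSE I would write $\binomquantestimator-\bar X=\tfrac1n\sum_i(\binomquantui_i-X_i)+\tfrac{2X^{\max}}{n(k-1)}\sum_i(\binomnoise_i-mp)$: the two sums are independent, every summand is coordinatewise mean zero, and summands are independent across clients, so the MSE equals the sum of all coordinatewise variances. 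Stochastic rounding into bins of width $\tfrac{2X^{\max}}{k-1}$ has per-coordinate variance $\le\tfrac14\big(\tfrac{2X^{\max}}{k-1}\big)^2=\tfrac{\bound^2}{(k-1)^2}$, contributing $\tfrac{d\bound^2}{n(k-1)^2}$ after averaging over the $n$ clients, while $\binomnoise_i(j)\sim\Bin(m,p)$ has variance $mp(1-p)$, contributing $\tfrac{d}{n}\cdot\tfrac{4mp(1-p)\bound^2}{(k-1)^2}$; adding these gives the claimed bound. (This part is unconditional; only the privacy statement uses the hypothesis on $mnp(1-p)$.)

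\textbf{Communication.} This is a counting argument. After the affine, data-independent change of units $y\mapsto\tfrac{k-1}{2X^{\max}}(y+X^{\max})$, client $i$ needs only to transmit, for each coordinate $j$, the integer $V_i(j)+\binomnoise_i(j)$, where $V_i(j)\in\{0,\dots,k-1\}$ is the index with $\binomquantui_i(j)=B(V_i(j))$ and $\binomnoise_i(j)\in\{0,\dots,m\}$; from these the server reconstructs every $Y_i$. The resulting vector lies in $\{0,\dots,k+m-1\}^d$ and can be encoded jointly in $\lceil d\log_2(k+m)\rceil=d\log_2(k+m)+\cO(1)$ bits, plus $\tilde\cO(1)$ bits of protocol overhead (communicating $X^{\max}$, the public random seed, etc.); summing over the $n$ clients yields $\cC(\binomprotocol,X^n)=n\,(d\log_2(k+m)+\tilde\cO(1))$.

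\textbf{Privacy.} The key reduction is that $\binomquantestimator$ is a fixed, data-independent affine image of $W\defeq\sum_{i=1}^n V_i+\big(\Bin(nm,p)-nmp\big)$, which is exactly the output of the Binomial mechanism $\M_b^{N,p,s}$ with $N=nm$ and scale $s=1$ applied to the \emph{integer}-valued query $g(D)\defeq\sum_i V_i\in\mathbb{Z}^d$. Since differential privacy is closed under post-processing, it suffices to show $W$ is $(\epsilon,2\delta)$-DP. As $g$ is randomized (the $V_i$ depend on the rounding coins), I would invoke Lemma~\ref{lemma:highprobsensitivity}: show $g$ is $(\{\Delta_1,\Delta_2,\Delta_\infty\},\delta)$-sensitive with the quantities of \eqref{eq:d1d2d3}, and combine this with the $(\epsilon,\delta)$-DP guarantee of $\M_b^{nm,p,1}$ from Theorem~\ref{thm:discbinomial} for those sensitivities — whose premise $Np(1-p)\ge\max(23\log(10d/\delta),2\Delta_\infty/s)$ is precisely the assumed $mnp(1-p)\ge\max(23\log(10d/\delta),2\Delta_\infty)$ because $s=1$ — and the lemma upgrades the conclusion to $(\epsilon,\delta+\delta)$-DP of $W$, hence of $\binomquantestimator$.

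\textbf{The main obstacle: high-probability sensitivity.} The crux is bounding $\|g(D)-g(D')\|_\bullet=\|V_i-V_i'\|_\bullet$ for neighbours $D,D'$ differing only in client $i$. I would couple the two roundings by dithering each coordinate with a \emph{shared} uniform: $V_i(j)=\lfloor a_j+\xi_j\rfloor$ and $V_i'(j)=\lfloor a_j'+\xi_j\rfloor$ with $\xi_1,\dots,\xi_d$ i.i.d.\ uniform on $[0,1]$, where $a_j=(X_i(j)+X^{\max})/(2q)$ and $q=X^{\max}/(k-1)$; then $V_i(j)-V_i'(j)$ is a genuine stochastic rounding of $\delta_j\defeq a_j-a_j'$ (supported on the two integers adjacent to $\delta_j$, mean $\delta_j$) and these discrepancies are independent across $j$. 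For $\ell_\infty$, $\|V_i-V_i'\|_\infty\le k+1$ deterministically since both bin indices lie in $\{0,\dots,k-1\}$. For $\ell_1$, $\|V_i-V_i'\|_1$ is a sum of independent variables each confined to two adjacent integers (effective range $\cO(1)$) with total mean $\sum_j|\delta_j|=\tfrac1{2q}\|X_i-X_i'\|_1\le\tfrac{\sqrt d}{2q}\|X_i-X_i'\|_2\le\tfrac{\sqrt d\bound}{q}$ and total variance bounded by the same quantity, so Bernstein's inequality gives $\|V_i-V_i'\|_1\le\tfrac{\sqrt d\bound}{q}+\sqrt{\tfrac{2\sqrt d\bound\log(2/\delta)}{q}}+\tfrac43\log\tfrac2\delta=\Delta_1$ with probability $\ge1-\delta$. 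For $\ell_2$, use $\|V_i-V_i'\|_2\le\|a-a'\|_2+\|(V_i-V_i')-(a-a')\|_2$ with $\|a-a'\|_2=\tfrac1{2q}\|X_i-X_i'\|_2\le\tfrac{\bound}{q}$, and bound the residual via $\|(V_i-V_i')-(a-a')\|_2^2\le\|(V_i-V_i')-(a-a')\|_1$ (each coordinate's error lies in $[0,1)$), another Bernstein-controlled quantity tied to the same good event, producing the displayed $\Delta_2$; a union bound keeps the total failure probability at $\delta$. The hard part is exactly this: designing the dithered coupling so the coordinate discrepancies are independent with the right two-point support, and then pushing the Bernstein/Bennett bookkeeping — including the interplay by which the $\ell_2$ residual is controlled through the $\ell_1$ count — carefully enough to recover the precise constants of $\Delta_1$ and $\Delta_2$. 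Everything else is routine.
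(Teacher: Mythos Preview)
Your proposal is correct and follows the paper's proof closely. The MSE and communication arguments are identical in structure; the privacy argument has the same architecture --- view the aggregated quantized output as the Binomial mechanism applied to the integer-valued randomized query $\sum_i V_i$, establish high-probability $(\Delta_1,\Delta_2,\Delta_\infty)$-sensitivity of that query via a coupling plus Bernstein, and then combine Theorem~\ref{thm:discbinomial} with Lemma~\ref{lemma:highprobsensitivity} to obtain $(\epsilon,2\delta)$-privacy.

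The one genuine difference is the coupling. The paper (Lemma~\ref{lemma:sensitivityproof}) splits coordinates according to whether $X_n(j)$ and $X_n'(j)$ land in the same quantization bin: when they do it shares a uniform, when they don't it rounds the two values independently and tracks an auxiliary $Z_i(j)\in\{0,1,2\}$. Your dithered coupling $V_i(j)=\lfloor a_j+\xi_j\rfloor$, $V_i'(j)=\lfloor a_j'+\xi_j\rfloor$ with a single shared $\xi_j$ handles both cases at once and forces $V_i(j)-V_i'(j)$ onto just two adjacent integers with mean $\delta_j$; this is cleaner and actually gives slightly sharper tails (your centred increments are bounded by $1$, against the paper's $M=2$, which is why you would see $\tfrac{2}{3}\log(2/\delta)$ where the paper has $\tfrac{4}{3}\log(2/\delta)$).

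One small caveat on the $\ell_2$ step. Your inequality $\|(V_i-V_i')-\delta\|_2^2\le\|(V_i-V_i')-\delta\|_1$ is valid (each coordinate error lies in $(-1,1)$, not $[0,1)$, but $x^2\le|x|$ still applies); however the mean of that $\ell_1$ residual is $\sum_j 2g_j(1-g_j)$, not $\sum_j g_j(1-g_j)$, so you pick up an extra factor of~$2$ under the square root and overshoot the stated $\Delta_2$ in~\eqref{eq:d1d2d3}. The paper sidesteps this by applying Bernstein directly to the \emph{squared} residuals $(Z_n(j)-\expect Z_n(j))^2$, using $\expect[(Z_n(j)-\expect Z_n(j))^2]\le\expect[Z_n(j)]$ and a fourth-moment bound for the variance. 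The same move works in your setup: Bernstein on $\sum_j\bigl((V_i(j)-V_i'(j))-\delta_j\bigr)^2$, whose mean is $\sum_j g_j(1-g_j)\le\sum_j|\delta_j|\le\sqrt{d}D/q$, recovers the paper's $\Delta_2$ on the nose. With that adjustment your argument goes through with constants at least as good as the paper's.
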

We provide the proof in Appendix \ref{sec:dmebinmechproof}. For $\epsilon \leq 1$, we bound the communication cost as follows. 
\begin{corollary}
\label{corr:asymptoticbinom}
There exists an implementation of $\binomprotocol$, which achieves the same privacy and error as the full precision Gaussian mechanism with a total communication complexity of
  \[ n \cdot d \cdot \left(\log_2 \left(\sqrt{d} + \frac{d}{n \epsilon^2} \right) +  \cO( \log \log \left(\frac{nd}{\epsilon \delta} \right))\right)   \text{  bits.} \] 
\end{corollary}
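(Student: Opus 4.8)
\textbf{Proof proposal for Corollary~\ref{corr:asymptoticbinom}.}

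The plan is to instantiate Theorem~\ref{thm:MSEbinomial} with a careful choice of the three parameters $k$, $m$, $p$ (taking $p=1/2$ for concreteness, so that $b_p, c_p, d_p$ are the explicit constants quoted in Theorem~\ref{thm:discbinomial}) so that the resulting mechanism matches the Gaussian baseline of Theorem~\ref{thm:gaussian} in both privacy level and MSE, and then to read off the communication cost $\cC(\binomprotocol, X^n) = n(d\log_2(k+m) + \tilde\cO(1))$. First I would fix the target noise variance: to match the Gaussian protocol's error $\cE(\pi_g,X)=d\sigma^2/n$ with $\sigma = \Theta\big(\tfrac{D}{\sqrt n}\sqrt{\log(1/\delta)}\big)$, I need the scaled Binomial noise per client to have per-coordinate variance $\big(\tfrac{2X^{\max}}{k-1}\big)^2\cdot mp(1-p)$ roughly equal to $\sigma^2$, i.e. (with $X^{\max}=D$, $p=1/2$) $\tfrac{D^2 m}{(k-1)^2}\asymp \sigma^2$. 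The leftover quantization error $\tfrac{dD^2}{n(k-1)^2}$ must also be dominated by $\tfrac{d\sigma^2}{n}$, which is automatic once $m\gtrsim 1$. So one degree of freedom remains, and I would spend it to shrink the effective scale $s = q\cdot\tfrac{2}{k-1}$-analogue relative to $\sigma/\sqrt d$ — the condition ``$s \le \sigma/(c\sqrt d)$'' from the discussion after Theorem~\ref{thm:discbinomial} — so that the non-leading terms in the $\epsilon$ expression \eqref{eqn:binomialguarantee} only contribute a $1+\cO(1/c)$ multiplicative factor and the privacy guarantee coincides with Gaussian's up to constants.

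Next I would translate this scale condition into a lower bound on $k$. Since $s$ here plays the role of $\tfrac{2X^{\max}}{k-1}=\tfrac{2D}{k-1}$ and $\sigma\asymp \tfrac{D}{\sqrt n}\sqrt{\log(1/\delta)}$, requiring $s\lesssim \sigma/\sqrt d$ forces $k-1 \gtrsim \sqrt d\cdot\sqrt{d/n}\cdot(\text{const})=\Theta\big(\tfrac{d}{\sqrt n}\big)$ up to $\mathrm{polylog}$ factors; more precisely, plugging $\sigma^2 \asymp D^2/(n\epsilon^2)\cdot\log(1/\delta)$ for the $\epsilon\le 1$ regime gives $k = \Theta\big(\sqrt d + \tfrac{d}{n\epsilon^2}\big)$ up to logarithmic factors — this is exactly the first term inside the $\log_2$ in the corollary. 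Then, with $k$ pinned down, the variance-matching equation $\tfrac{D^2 m}{(k-1)^2}\asymp\sigma^2$ fixes $m = \Theta\big((k-1)^2\sigma^2/D^2\big)$, and I must still check the side condition $mnp(1-p)\ge\max(23\log(10d/\delta), 2\Delta_\infty)$ from Theorem~\ref{thm:MSEbinomial}; since $\Delta_\infty = k+1$ and $mn \asymp (k-1)^2 n\sigma^2/D^2 \asymp (k-1)^2\log(1/\delta)/\epsilon^2 \gg k$ for this choice of $k$, the condition holds comfortably. Crucially $m$ itself is $\mathrm{poly}(d,n,1/\epsilon,\log 1/\delta)$, so $\log_2(k+m) = \log_2 k + \cO(\log\log(nd/\epsilon\delta))$, and combining with the $\tilde\cO(1)$ overhead term in the communication bound yields the stated $n\cdot d\cdot\big(\log_2(\sqrt d + d/(n\epsilon^2)) + \cO(\log\log(nd/\epsilon\delta))\big)$.

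The step I expect to be the main obstacle is the bookkeeping in the second and third terms of \eqref{eqn:binomialguarantee}: I need to verify that, with the chosen $k,m$, each of the correction terms involving $\Delta_1$, the $\Delta_2 c_p\sqrt{\log(10/\delta)}$ piece, and the $\Delta_\infty$ pieces is genuinely lower-order than the leading $\tfrac{\Delta_2\sqrt{2\log(1.25/\delta)}}{s\sqrt{Np(1-p)}}$ term, using the explicit forms of $\Delta_1(X^{\max},D)$ and $\Delta_2(X^{\max},D)$ in \eqref{eq:d1d2d3} (which themselves have a somewhat delicate nested-square-root structure in $\sqrt d D/q$ with $q = X^{\max}/(k-1)$). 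This amounts to checking that $q = \Theta(D/(k-1)) = \Theta(D\sqrt n/d)$ (up to polylog) makes $\sqrt d D/q = \Theta(d^{3/2}/(\sqrt n\, D)\cdot D)$-type quantities grow only polynomially, so that all the ``discretization'' corrections scale like $\epsilon_{\text{Gauss}}\cdot\cO(1/c)$ for a constant $c$ we can make large by a slightly more aggressive choice of $k$ (absorbed into the constants). Once that domination is established, the privacy claim ``same as full-precision Gaussian'' follows, the MSE claim follows from the explicit $\cE$ bound in Theorem~\ref{thm:MSEbinomial}, and the communication claim follows by substitution; I would present these three verifications in that order.
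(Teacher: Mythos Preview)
Your overall strategy---pick $k,m,p$ to (i) match the Gaussian MSE via the variance identity in Theorem~\ref{thm:MSEbinomial}, (ii) make the non-leading terms in~\eqref{eqn:binomialguarantee} lower order so the privacy matches Gaussian, and (iii) read off $\log_2(k+m)$---is exactly right, and is the route the paper intends (it gives no separate proof of Corollary~\ref{corr:asymptoticbinom}). But your parameter derivation contains a real error that breaks the final bound.

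The problem is your use of the condition ``$s\le\sigma/(c\sqrt d)$'' from the discussion after Theorem~\ref{thm:discbinomial}. There $\sigma$ means $s\sqrt{Np(1-p)}$, the standard deviation of the \emph{Binomial} noise in the scaled units---not the Gaussian baseline's $\sigma$. Dividing both sides by $s$, that condition is simply $Np(1-p)\ge c^2 d$, i.e.\ $mn\gtrsim d$, which constrains $m$, not $k$. So your conclusion $k=\Theta(\sqrt d + d/(n\epsilon^2))$ does not follow from it. The genuine reason one needs $k\gtrsim\sqrt d$ is specific to the DME sensitivity formula~\eqref{eq:d1d2d3}: there $\Delta_2\approx (k-1)+\sqrt{\Delta_1}$ with $\Delta_1\asymp\sqrt d\,(k-1)$, and to have $\Delta_2\approx(k-1)$ (the piece that exactly reproduces the Gaussian leading term) you need $\sqrt{\sqrt d\,(k-1)}=o(k-1)$, i.e.\ $k-1\gtrsim\sqrt d$. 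Taking $k-1=\Theta(\sqrt d)$ up to polylog and then matching variance gives $m=\Theta\bigl((k-1)^2\log(1/\delta)/(n\epsilon^2)\bigr)=\Theta\bigl(d/(n\epsilon^2)\bigr)$ up to polylog; hence $k+m=\Theta(\sqrt d + d/(n\epsilon^2))$ up to polylog and the stated communication bound follows.

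With your choice $k=\Theta(d/(n\epsilon^2))$ in the regime $d/(n\epsilon^2)\gg\sqrt d$, variance matching forces $m=\Theta\bigl((k-1)^2/(n\epsilon^2)\bigr)=\Theta\bigl(d^2/(n\epsilon^2)^3\bigr)$, so $\log_2(k+m)\approx 2\log_2 d-3\log_2(n\epsilon^2)$, which overshoots the claimed $\log_2(\sqrt d+d/(n\epsilon^2))$ by $\Theta\bigl(\log(d/(n\epsilon^2)^2)\bigr)$, not $\cO(\log\log)$. Relatedly, your step ``$m$ is poly so $\log_2(k+m)=\log_2 k+\cO(\log\log)$'' is a non sequitur: $m$ being polynomial in $n,d,1/\epsilon,\log(1/\delta)$ only yields $\log m=\cO(\log(nd/\epsilon\delta))$, not $\log m\le\log k+\cO(\log\log)$. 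Reallocate to $k\asymp\sqrt d$ and $m\asymp d/(n\epsilon^2)$; then the rest of your plan---including the term-by-term bookkeeping you correctly flag as the main work---goes through.
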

Therefore our results provide precise non-asymptotic and asymptotic guarantees on the total communication with respect to k. 
The communication cost of the above algorithm is $\Omega(\log d)$ bits per coordinate per client, which can be prohibitive. In the next section we show that these bounds can be improved via rotation.

\subsection{Error reduction via randomized rotation}
\label{sec:rotation}
As seen in Corollary~\ref{corr:asymptoticbinom}, if  $\binomprotocol$ has error and privacy same as that of the Gaussian mechanism, it has high communication cost.
The proof reveals that this is due to the error being proportional to $O({d}(X^{\max})^2/{n})$. Therefore MSE reduces when $X^{\max}$ is small, e.g., when $X_i$ is uniform on the unit sphere, $X^{\max}$ is $\cO\left(\sqrt{(\log d)/d}\right)$ (whp) \cite{dasgupta2003elementary}.
\cite{suresh2016distributed} showed that the same effect can be observed by randomly rotating the vectors before quantization. Here we show that random rotation reduces the leading term as well as  improves the privacy guarantee. 

Using public randomness, all clients and the central server generate a
random orthogonal matrix $R \in \RR^{d\times d}$ according to some known distribution.
Given a protocol $\pi$ for \dme which takes inputs $X_1 \ldots X_n$, we define $\Rot(\pi, R)$ as the protocol where each client $i$ first computes, 
\[
X'_i = R X_i,
\]
and runs the protocol on $X'_1,X'_2,\ldots X'_n$. The server then obtains the mean estimate $\hat{\bar{X'}}$ in the rotated space using the protocol $\pi$ and then multiplies by $R^{-1}$ to obtain the coordinates in the original basis, i.e.,
\[
\hat{\bar{X}} = R^{-1} \hat{\bar{X'}}.
\]
Due to the fact that $d$ can be huge in practice, we need orthogonal matrices that permits fast matrix-vector products. Naive matrices that
support fast multiplication such as block-diagonal matrices often
result in high values of $\|X'_i\|_{\infty}^2$.  Similar to \cite{suresh2016distributed}, we propose to use a special type of orthogonal matrix $R = \frac{1}{\sqrt{d}} HA$, where $A$ is a random diagonal matrix
with i.i.d. Rademacher entries ($\pm 1$ with probability $0.5$) and $H$ is a Walsh-Hadamard matrix \cite{horadam2012hadamard}. 
The Walsh-Hadamard matrix of dimension $2^m$ for $m \in \mathcal{N}$ is given by the recursive formula, 
\begin{align*}
H(2^1) = \begin{bmatrix}
1 & 1\\
1 & -1\end{bmatrix}, 
H(2^m) = \begin{bmatrix}
H(2^{m-1}) &  H(2^{m-1})\\
H(2^{m-1})  & -H(2^{m-1})\end{bmatrix}.
\end{align*}
Applying both rotation and its
inverse takes $\cO(d \log d)$ time and $\cO(1)$ space
(with an in-place algorithm). 




The next theorem provides the MSE guarantees for $\Rot(\binomprotocol, HA)$.
\begin{theorem}[Appendix~\ref{sec:rotateddmebinmechproof}]
\label{thm:mainthmbinomialrotated}
For any $\delta$, let $X^{\max} = 2 \bound\sqrt{\frac{\log(2nd/\delta)}{d}}$,
 then 
    \[ \cE(\Rot(\pi_{sk}(\Bin(m,p))), HA) \leq  \frac{2 \log \frac{2nd}{\delta} \cdot D^2}{n(k-1)^2} + \frac{8 \log \frac{2nd}{\delta} \cdot mp(1-p) D^2}{n(k-1)^2} + 4\bound^2\delta^2\]
     and the bias is $\leq 2\bound\delta$. Further if
     $mnp(1-p) \geq \max \left(23\log(10d/\delta), 2\Delta_{\infty}(\bound, X^{max}) \right),$ then
      $\hat{\bar{X}}(\Rot(\binomprotocol))$ is $(\epsilon, 3\delta)$ differentially private where $\epsilon$ is given by Theorem \ref{thm:discbinomial} with sensitivity parameters $\{\Delta_1(X^{max}, D), \Delta_2(X^{max}, D), \Delta_\infty(X^{max}, D)\}$ (Eq.~\eqref{eq:d1d2d3}).  Furthermore, 
\[
\cC(\Rot(\binomprotocol), X^n) = n \cdot (d \log_2(k+m) + \tilde\cO(1) ).
\]    
\end{theorem}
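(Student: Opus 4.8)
The plan is to reduce to the unrotated analysis of \theoremref{thm:MSEbinomial}, exploiting two properties of $R=\frac1{\sqrt d}HA$: it is orthogonal, hence preserves $\ell_2$ norms and distances, and it is drawn from public randomness, hence data-independent, so pre-multiplying the inputs by $R$ and post-multiplying the estimate by $R^{-1}$ changes neither the MSE (up to an isometry) nor the privacy guarantee. The one genuinely new ingredient is an $\ell_\infty$ concentration bound: I would first show that with probability at least $1-\delta$ over $R$, every coordinate of every rotated vector satisfies $|(RX_i)(j)|\le X^{\max}=2\bound\sqrt{\log(2nd/\delta)/d}$. For fixed $X_i$, the coordinate $(RX_i)(j)=\frac1{\sqrt d}\sum_\ell H_{j\ell}A_{\ell\ell}X_i(\ell)$ is a sum of independent symmetric terms of magnitude $|X_i(\ell)|/\sqrt d$ with squared sum $\|X_i\|_2^2/d\le\bound^2/d$, so Hoeffding's inequality gives $\Pr(|(RX_i)(j)|>X^{\max})\le 2\exp(-2\log(2nd/\delta))$, and a union bound over the $nd$ pairs $(i,j)$ closes it; call this event $\mathcal G$.

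For the MSE and bias, condition on $R$ and note $\|\hat{\bar X}-\bar X\|_2=\|\hat{\bar{X'}}-\bar{X'}\|_2$ with $X'_i=RX_i$, since $R^{-1}$ is an isometry. On $\mathcal G$ the clipping in $\pi_{sk}$ is vacuous, the protocol acts on vectors with $\|X'_i\|_2\le\bound$ and coordinates in $[-X^{\max},X^{\max}]$, and the conditional MSE is, by \theoremref{thm:MSEbinomial} with $X^{\max}$ in place of $\bound$ in the per-coordinate variance, at most $\frac{d(X^{\max})^2}{n(k-1)^2}+\frac{4d\,mp(1-p)(X^{\max})^2}{n(k-1)^2}$; substituting $d(X^{\max})^2=4\bound^2\log(2nd/\delta)$ produces the first two stated terms. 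On $\mathcal G^c$ I would use only crude bounds: $\clip(\cdot)$ is $1$-Lipschitz and contracts toward $0$, so $\|\clip(X'_i)\|_2\le\|X'_i\|_2\le\bound$, which makes the conditional bias of $\hat{\bar X}$ at most $\bound$ while the conditional variance is not increased (rounding into $[-X^{\max},X^{\max}]$ and the $\Bin(m,p)$ noise both keep the same variance bounds); hence the conditional MSE on $\mathcal G^c$ is at most $\bound^2$ plus a lower-order term, and multiplying by $\Pr(\mathcal G^c)$, which is at most $\delta$ and in fact of order $\delta^2$, produces the additive $4\bound^2\delta^2$. Since the estimator is conditionally unbiased on $\mathcal G$, the total bias equals $\|\expect[\mathbf 1_{\mathcal G^c}\,\expect[\hat{\bar X}-\bar X\mid R]]\|\le\Pr(\mathcal G^c)\,(\|\expect[\hat{\bar X}\mid R]\|+\|\bar X\|)\le 2\bound\delta$.

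For privacy, since $R$ is public and $R^{-1}$ is a fixed post-processing map, the privacy loss of $\Rot(\binomprotocol,HA)$ on neighbors $X,X_{\otimes i}$ equals that of $\binomprotocol$ on the rotated neighbors $\{RX_i\}$ and $\{RX_i'\}$, whose differing pair of vectors still has $\ell_2$-norm at most $\bound$ — the only input property the sensitivity computation of \theoremref{thm:MSEbinomial} relies on. That computation therefore applies verbatim with the new $X^{\max}$: each client's clip-then-quantize vector is supported on the bin grid inside $[-X^{\max},X^{\max}]^d$, the aggregated noise $\sum_i T_i(j)\sim\Bin(nm,p)$ realizes a single Binomial mechanism on $\sum_i\binomquantui_i$ with $N=nm$, the $(\Delta_Q,\delta)$-sensitivities are the quantities in \eqref{eq:d1d2d3} evaluated at $(X^{\max},\bound)$, and feeding these into \theoremref{thm:discbinomial} through \lemmaref{lemma:highprobsensitivity} gives $(\epsilon,2\delta)$-DP; the third $\delta$ (so $3\delta$ overall) accounts for $\mathcal G^c$, on which $X^{\max}$ is not an a priori bound on $\|RX_i\|_\infty$ and the clipping fires. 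The communication bound is immediate and unchanged: computing $R,R^{-1}$ from shared randomness is free, each transmitted coordinate $Y_i(j)=B(r)+\frac{2X^{\max}}{k-1}T_i(j)$ takes one of at most $k+m$ values, and broadcasting $X^{\max}$ costs $\tilde\cO(1)$.

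The main obstacle is the privacy bookkeeping: one must re-examine the proof of \theoremref{thm:MSEbinomial} in the regime $X^{\max}\ll\bound$ where clipping genuinely fires, confirm that it uses nothing beyond $\|X_i\|_2\le\bound$ (preserved exactly by rotation) and the per-coordinate box $[-X^{\max},X^{\max}]$ (enforced by construction, with $\mathcal G$ needed only to make this box non-restrictive at the stated loss), and then correctly combine the three sources of failure probability — the randomized-rounding tail absorbed in \eqref{eq:d1d2d3}, the Binomial tail in \theoremref{thm:discbinomial}, and the rotation tail $\Pr(\mathcal G^c)$.
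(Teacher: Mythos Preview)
Your proposal is correct and follows essentially the same route as the paper: define a good-rotation event via Hoeffding on the Rademacher sums in $HA$, reduce the MSE on that event to \theoremref{thm:MSEbinomial} with the smaller $X^{\max}$, crudely bound the bad event, and for privacy condition on the rotation and invoke the $(\epsilon,2\delta)$ guarantee of \theoremref{thm:MSEbinomial} on the rotated inputs, absorbing the bad-rotation probability into a third $\delta$. Two small remarks: (i) the paper defines the good set $U_{\text{good}}$ to also include the neighbor's differing vector $X'_n$, which makes the privacy integration over $R$ clean without your extra appeal to ``clipping is $1$-Lipschitz so sensitivity only drops''; (ii) your observation that Hoeffding actually gives $\Pr(\mathcal G^c)=\cO(\delta^2)$ is exactly what justifies the $4D^2\delta^2$ term, which the paper uses but only states the weaker $1-\delta$ bound in its cited lemma.
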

The following corollary bounds the communication cost for  $\Rot(\binomprotocol, HA)$ when $\epsilon \leq 1$. 
\begin{corollary}
\label{corr:asymptoticrotbinom}
There exists an implementation of $\Rot(\binomprotocol, HA)$, that achieves the same error and privacy of the 
    full precision Gaussian mechanism with a total communication complexity:
  \[ n \cdot d \left(\log_2 \left(1 + \frac{d}{n \epsilon^2} \right) + \cO \left( \log \log \frac{dn}{\epsilon\delta} \right) \right) \text{  bits.} \] 
\end{corollary}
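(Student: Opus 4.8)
The plan is to specialize Theorem~\ref{thm:mainthmbinomialrotated} by fixing $p=1/2$ (so the clean constants $b_p=1/3$, $c_p=5/2$, $d_p=2/3$ apply) and $X^{\max}=2\bound\sqrt{\log(2nd/\delta)/d}$, and then to choose the two remaining free parameters $k$ and $m$ so that both the MSE and the privacy level match those of the full-precision Gaussian mechanism of Theorem~\ref{thm:gaussian}, while keeping $\log_2(k+m)$ as small as possible. Throughout I work in the regime $\epsilon\le 1$, where the Gaussian mechanism is valid; the target error is the Gaussian value $\tfrac{8\bound^2 d\log(1.25/\delta)}{n^2\epsilon^2}$ and the target privacy is $\epsilon$.

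First I would reduce the three-term privacy bound~\eqref{eqn:binomialguarantee} to its leading (Gaussian-like) term. Writing $s=\tfrac{2X^{\max}}{k-1}$ for the noise scale and noting that after aggregation the binomial noise is $\Bin(mn,p)$, I impose $mn\ge 4c^2 d$ for a suitable constant $c$; by the Cauchy--Schwarz reduction spelled out right after Theorem~\ref{thm:discbinomial} (with $\sigma=s\sqrt{mnp(1-p)}$ and $s\le\sigma/(c\sqrt d)$) the $\Delta_1$- and $\Delta_\infty$-terms are then an $\cO(1/c)$ fraction of the first term, so $\epsilon=\tfrac{\Delta_2\sqrt{2\log(1.25/\delta)}}{s\sqrt{mnp(1-p)}}(1+\cO(1/c))$. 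I also need the side conditions $mnp(1-p)\ge\max(23\log(10d/\delta),2\Delta_\infty(\bound,X^{\max}))$ with $\Delta_\infty=k+1$, together with $k-1\gtrsim\sqrt{\log(2nd/\delta)/\log(1.25/\delta)}$ so that the $\bound/q$ term dominates in the expression for $\Delta_2$ in~\eqref{eq:d1d2d3}; under these, $\Delta_2=\Theta(\bound/q)=\Theta((k-1)\sqrt d/\sqrt{\log(2nd/\delta)})$, which after substitution yields $\epsilon=\Theta\big(\tfrac{(k-1)\sqrt d\sqrt{\log(1.25/\delta)}}{\sqrt{mn\log(2nd/\delta)}}\big)$.

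Next I would match the parameters. Equating this $\epsilon$ to the target determines $mn=\Theta\big(\tfrac{(k-1)^2 d\log(1.25/\delta)}{\epsilon^2\log(2nd/\delta)}\big)$; plugging into the MSE bound of Theorem~\ref{thm:mainthmbinomialrotated} makes the noise term $\Theta$ of the Gaussian error automatically, while the bias $2\bound\delta$ and the additive $4\bound^2\delta^2$ are negligible. The remaining (quantization) MSE term $\tfrac{2\log(2nd/\delta)\bound^2}{n(k-1)^2}$ must also be at most the Gaussian error, which forces $(k-1)^2\gtrsim\tfrac{n\epsilon^2\log(2nd/\delta)}{d\log(1.25/\delta)}$. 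I then take $k$ to be the least integer satisfying this together with the side conditions above; a routine check shows all of them are met by some $k-1=\Theta\big(\max\{1,\epsilon\sqrt{n/d}\}\cdot\mathrm{polylog}(nd/\delta)\big)$, which is polylogarithmic in the regime of interest $d=\Omega(n\epsilon^2)$ (in particular for $n\approx d$). With this choice, $m=\Theta\big(\tfrac{(k-1)^2 d\log(1.25/\delta)}{n\epsilon^2\log(2nd/\delta)}\big)=\Theta(d/(n\epsilon^2))$ up to logarithmic ratios whose logarithms are $\cO(\log\log(dn/(\epsilon\delta)))$; hence $\log_2(k+m)\le\log_2\big(1+d/(n\epsilon^2)\big)+\cO(\log\log(dn/(\epsilon\delta)))$. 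Multiplying by $nd$ and folding the $\tilde{\cO}(1)$ per-coordinate protocol overhead (also $\cO(\log\log)$) into the same term gives the claim.

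The main obstacle is the privacy bookkeeping in the second step: the Binomial bound~\eqref{eqn:binomialguarantee} has three terms scaling differently in $\Delta_1,\Delta_2,\Delta_\infty$ and in the noise variance, and one must simultaneously keep the lower-order terms dominated, keep $\Delta_2$ in the favorable $\bound/q$ regime, and satisfy the variance side conditions, all without blowing up $k$. Tracking exactly which logarithmic factors land inside $\log_2(\cdot)$ versus outside as $\log\log$ is the delicate part; everything else is a direct substitution into Theorem~\ref{thm:mainthmbinomialrotated}, mirroring the proof of Corollary~\ref{corr:asymptoticbinom} but exploiting that the rotation shrinks $X^{\max}$ by a factor $\sqrt d$ (up to logs), which is precisely what turns the $\sqrt d$ inside the logarithm of Corollary~\ref{corr:asymptoticbinom} into a constant here.
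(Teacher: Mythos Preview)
The paper states this corollary without proof, as a direct consequence of Theorem~\ref{thm:mainthmbinomialrotated} (exactly as Corollary~\ref{corr:asymptoticbinom} is stated without proof after Theorem~\ref{thm:MSEbinomial}). Your plan---fix $p=1/2$, take $X^{\max}=2\bound\sqrt{\log(2nd/\delta)/d}$ as in Theorem~\ref{thm:mainthmbinomialrotated}, and then choose $k$ and $m$ so that (i) the quantization and noise MSE terms each match the Gaussian error and (ii) the three-term privacy bound~\eqref{eqn:binomialguarantee} collapses to its leading $\Delta_2$ term via $mn\gtrsim d$---is precisely the intended derivation, and your computations are correct. In particular, your observation that the rotation shrinks $X^{\max}$ by a factor $\sqrt{d}$ (up to logs), and that this is exactly what removes the $\sqrt d$ inside the logarithm of Corollary~\ref{corr:asymptoticbinom}, is the whole point of the corollary.

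Two minor remarks. First, the paper also does not prove Corollary~\ref{corr:asymptoticbinom}, so ``mirroring'' it refers only to an implicit argument; your own calculations stand on their own. Second, your caveat that $k$ stays polylogarithmic only when $d=\Omega(n\epsilon^2)$ is accurate and matches the paper's own emphasis (the sentence following the corollary and the abstract both single out the regime $d\approx n\epsilon^2$); the statement as written should be read with that regime in mind.
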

Hence if $d = \cO(n\epsilon^2)$, then $\Rot(\binomprotocol, HA)$ has the same privacy and utilities as the Gaussian mechanism, but with just $\cO(nd \log \log (nd/\delta\epsilon))$ communication cost. 

\section{Discussion} 


We trained a three-layer model (60 hidden nodes each with ReLU activation) on the infinite MNIST dataset~\cite{mnistinfinity} with 25M data points and 25M clients. At each step 10,000 clients send their data to the server. This setting is close to real-world settings of federated learning where there are hundreds of millions of users. The results are in Figure~\ref{fig:experiment}. Note that the models achieve different levels of accuracy depending on communication cost and privacy parameter $\epsilon$. We note that we trained the model with exactly one epoch, so each sample was used at most once in training. In this setting, the per batch $\epsilon$ and the overall $\epsilon$ are the same.

There are several interesting future directions. On the theoretical side, it is not clear if our analysis of Binomial mechanism is tight. Furthermore, it is interesting to have better privacy accounting for Binomial mechanism via a moments accountant. On the practical side, we plan to explore the effects of neural network topology, over-parametrization, and optimization algorithms on the accuracy of the privately learned models.

\section{Acknowledgements}
The authors would like to thank Keith Bonawitz, Vitaly Feldman, Jakub Konečný, Ben Kreuter, Ilya Mironov, and Kunal Talwar for their valuable suggestions and inputs. 

\bibliographystyle{plain}
\bibliography{refs2}

\appendix
\onecolumn

\section{Proof of biased SGD}
\label{appsec:biasedsgd}
The proof is similar to the SGD proof of  \cite{ghadimi2013stochastic}, however we account for bias in gradient estimates.
Define the random variable $\delta_t \defeq \tilde{g}_t(w_t) - \nabla F(w_{t-1})$. By the definitions of $L$ and $\gamma$, 
\begin{align*}
  F(w_{t+1}) - F(w_{t}) &\leq \nabla F(w_{t})^T(w_{t+1} - w_{t}) + \frac{L}{2}\|w_{t+1} - w_{t}\|^2 \\
  &\leq -\nabla F(w_{t})^T(\gamma \tilde{g}_t(w_t)) + \gamma^2\frac{L}{2}\|\tilde{g}_t(w_t)\|^2 \\
  &\leq -\gamma(1 - \frac{\gamma L}{2})\|\nabla F(w_{t})\|^2 + \gamma(1 - \gamma L)\|\nabla F(w_{t})\|\| \delta_t\| + \gamma^2\frac{L}{2}\|\delta_t\|^2,
\end{align*}
where the last inequality uses the fact that $\gamma L \leq 1$. Rearranging the above inequality and summing over all $t$ we get that 
\begin{align*}
& \expect_{t \in \text{Uniform}(T)}[\|\nabla F(w_{t})\|^2]\\
    &\leq \frac{1}{T\gamma(2 - \gamma L)}\left(2(F(w_0) - F(w^*)) + T\gamma^2L\expect\|\delta_t\|^2\right) + \frac{2\gamma(1 - \gamma L)}{\gamma(2 - \gamma L)}\left(\frac{1}{T}\sum_{t=1}^{T} \|\nabla F(w_t)\|\|\expect[\delta_t]\|\right) \\
  &\leq \frac{1}{T\gamma(2 - \gamma L)}\left(2D_F + T\gamma^2L\sigma^2\right) + \frac{2\gamma(1 - \gamma L)}{\gamma(2 - \gamma L)}\bound B \\
  &\leq \frac{2D_F}{T} \max\left\{L, \frac{\sigma \sqrt{LT}}{{\sqrt{2D_F}}}\right\} + \frac{\sigma \sqrt{2LD_F}}{\sqrt{T}}  + \bound B\\
  &\leq \frac{2D_FL}{T} + \frac{2\sqrt{2LD_F}\sigma}{\sqrt{T}} + \bound B.
\end{align*}  
\section{Binomial Mechanism - Proof of Theorem \ref{thm:discbinomial}}
\label{app_sec:binomial}
To remind the reader, the binomial mechanism for releasing discrete valued queries on a database is defined as follows. Given a set of databases $\D$ and an integer valued query $f: \D \rightarrow \integers^{d}$, the binomial mechanism samples a vector $Z \in \integers^d$ such that all its coordinates are distributed as the binomial distribution with parameters $N,p$, i.e.
\[Z(j) \sim \Bin(N,p)\]
The Binomial mechanism releases the vector $s(Z - Np) + f(D)$ as the output to the query. For the analysis the reader is referred to the definition of $\ell_q$ norm sensitivity $\Delta_q$ for any $q > 0$ defined in \eqref{eqn:deltaqdefn}. The $q$ of interest to us for the Binomial mechanism will be $q = \{1,2,\infty\}$. Since our requirement from the Binomial mechanism will be  symmetric w.r.t. $p$ and $1-p$, throughout this proof, we assume that $p \leq 1/2$. 

To prove Theorem \ref{thm:discbinomial}, we need few auxiliary lemmas. We first state two inequalities which we use through-out the proof.
  \begin{lemma}[Bernstein's inequality]
  \label{thm:bernstein}
    Let $X_1, X_2 \ldots X_n$ be independent random variables such that $E[X_i] = 0$ and $|X_i| \leq M$ w.p. 1. Let $\sigma_i^2 \defeq \EE[X_i^2]$. Then for any $\delta \geq 0$,
    \[ 
    \text{Pr}  \left(   \sum X_i \geq \sqrt{2\sum \sigma_i^2 \log\frac{1}{\delta}} + \frac{2}{3} \cdot M \log\frac{1}{\delta}
    \right) \leq \delta.
        \]
  \end{lemma}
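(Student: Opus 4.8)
The plan is to prove this via the classical exponential-moment (Chernoff) method. I would first establish the standard Bernstein tail bound
\[
\Pr\left(\sum_i X_i \geq t\right) \leq \exp\left(-\frac{t^2/2}{\sigma^2 + Mt/3}\right), \qquad \sigma^2 = \sum_i \sigma_i^2,
\]
and then choose the threshold $t$ so that the right-hand side equals exactly $\delta$, recovering the stated form.

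The heart of the argument is a bound on the moment generating function of a single centered, bounded variable. Fixing $i$ and $0 < \lambda < 3/M$, I would expand $\EE[e^{\lambda X_i}]$ as a power series; using $\EE[X_i] = 0$ and the crude estimate $|\EE[X_i^k]| \leq M^{k-2}\sigma_i^2$ for $k \geq 2$ (which holds because $|X_i|^{k} = |X_i|^{k-2} X_i^2 \leq M^{k-2} X_i^2$), this gives
\[
\EE[e^{\lambda X_i}] \leq 1 + \sigma_i^2 \sum_{k \geq 2} \frac{\lambda^k M^{k-2}}{k!}.
\]
Applying the combinatorial bound $k! \geq 2 \cdot 3^{k-2}$ turns the tail of the series into a geometric sum, yielding $\EE[e^{\lambda X_i}] \leq 1 + \frac{\sigma_i^2 \lambda^2/2}{1 - \lambda M/3} \leq \exp\left(\frac{\sigma_i^2 \lambda^2/2}{1 - \lambda M/3}\right)$ via $1 + x \leq e^x$.

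Next I would tensorize: by independence $\EE[e^{\lambda \sum_i X_i}] = \prod_i \EE[e^{\lambda X_i}] \leq \exp\left(\frac{\sigma^2 \lambda^2/2}{1 - \lambda M/3}\right)$, and Markov's inequality applied to $e^{\lambda \sum_i X_i}$ gives $\Pr(\sum_i X_i \geq t) \leq \exp\left(-\lambda t + \frac{\sigma^2 \lambda^2/2}{1 - \lambda M/3}\right)$. Choosing the near-optimal $\lambda = t/(\sigma^2 + Mt/3)$ collapses the exponent to $-\frac{t^2/2}{\sigma^2 + Mt/3}$, the displayed Bernstein bound. Finally, setting $L = \log(1/\delta)$, it remains to verify that $t = \sqrt{2\sigma^2 L} + \tfrac{2}{3} M L$ makes this bound at most $e^{-L} = \delta$, i.e. that $t^2 - \tfrac{2ML}{3} t - 2\sigma^2 L \geq 0$. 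Since the larger root of this quadratic is $\tfrac{ML}{3} + \sqrt{\tfrac{M^2 L^2}{9} + 2\sigma^2 L}$, subadditivity of the square root ($\sqrt{a+b}\leq\sqrt a+\sqrt b$) bounds it above by $\tfrac{2ML}{3} + \sqrt{2\sigma^2 L} = t$, so the chosen $t$ dominates the root and the inequality holds.

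I do not expect a genuine obstacle, since each step is standard; the delicate point is purely quantitative, namely landing on the exact constants $\sqrt{2}$ and $2/3$ rather than a weaker variant. These constants are pinned down by two specific choices: the estimate $k! \geq 2\cdot 3^{k-2}$, which produces the $M/3$ in the denominator of the MGF bound, and the subadditivity-of-square-root trick in the final conversion, which must be applied carefully so that the additive threshold matches the lemma exactly.
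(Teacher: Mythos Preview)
Your argument is correct and is exactly the standard textbook derivation of Bernstein's inequality with the sharp constants $\sqrt{2}$ and $2/3$. The paper, however, does not prove this lemma at all: it merely states it (together with the Efron--Stein inequality) as a well-known auxiliary fact and then invokes it repeatedly in the analysis of the Binomial mechanism. So there is nothing to compare against; your proposal supplies a full proof where the paper gives none, and every step you outline (the MGF bound via $k!\geq 2\cdot 3^{k-2}$, tensorization, the choice $\lambda=t/(\sigma^2+Mt/3)$, and the square-root subadditivity conversion) is sound.
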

    \begin{lemma}[Efron-Stein inequality]
    \label{lem:efron}
    Let $f$ be a symmetric function of $n$ independent random variables $X_1,X_2,\ldots X_n$. Let $X'_1$ be an i.i.d. copy of $X_1$, then 
    \[
    \text{Var}(f) \leq \frac{n}{2} \cdot \EE \left[ (f(X_1,X_2,\ldots X_n) - f(X'_1,X_2,\ldots X_n))^2\right].
    \]
    \end{lemma}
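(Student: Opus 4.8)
The plan is to prove the symmetric Efron--Stein bound in the standard way: first establish the one-sided \emph{tensorization of variance} for an arbitrary (not necessarily symmetric) function, and then collapse the resulting sum using symmetry. Throughout I would write $Z \defeq f(X_1,\ldots,X_n)$, let $\EE_i[\cdot]$ denote conditional expectation that integrates out $X_i$ only (equivalently, conditioning on every variable except $X_i$), and let $\EE_{\le i}[\cdot] \defeq \EE[\,\cdot \mid X_1,\ldots,X_i]$. The first step is the Doob martingale decomposition: write the centered variable as the telescoping sum $Z - \EE[Z] = \sum_{i=1}^n \Delta_i$ with $\Delta_i \defeq \EE_{\le i}[Z] - \EE_{\le i-1}[Z]$. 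Each $\Delta_i$ is measurable with respect to $X_1,\ldots,X_i$ and satisfies $\EE_{\le i-1}[\Delta_i]=0$, so the $\Delta_i$ form a martingale difference sequence and are pairwise orthogonal in $L^2$; hence $\text{Var}(Z) = \sum_{i=1}^n \EE[\Delta_i^2]$.

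The second step bounds each term. Using independence of the $X_j$, I would rewrite $\Delta_i = \EE_{\le i}[\,Z - \EE_i[Z]\,]$: this holds because $\EE_i[Z]$ does not depend on $X_i$, so $\EE_{\le i-1}[Z] = \EE_{\le i}[\EE_i[Z]]$ by Fubini and independence. Conditional Jensen then gives $\Delta_i^2 \le \EE_{\le i}\big[(Z - \EE_i[Z])^2\big]$, and taking expectations yields $\EE[\Delta_i^2] \le \EE\big[(Z - \EE_i[Z])^2\big] = \EE[\text{Var}_i(Z)]$, where $\text{Var}_i$ denotes variance in $X_i$ with the other coordinates held fixed. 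Summing over $i$ produces the tensorization inequality $\text{Var}(Z) \le \sum_{i=1}^n \EE[\text{Var}_i(Z)]$.

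The third step converts conditional variances into symmetric differences and invokes symmetry. For any square-integrable $W$ and an independent copy $W'$ one has $\text{Var}(W) = \tfrac12 \EE[(W-W')^2]$; applying this conditionally with $W=Z$ and $W' = Z_i' \defeq f(X_1,\ldots,X_i',\ldots,X_n)$, where $X_i'$ is an i.i.d.\ copy of $X_i$, gives $\EE[\text{Var}_i(Z)] = \tfrac12 \EE[(Z - Z_i')^2]$. Thus $\text{Var}(Z) \le \tfrac12 \sum_{i=1}^n \EE[(Z-Z_i')^2]$. Finally, symmetry of $f$ makes every summand equal to $\EE\big[(f(X_1,\ldots,X_n) - f(X_1',X_2,\ldots,X_n))^2\big]$, so the sum becomes $n$ identical copies of a single term, yielding exactly the claimed factor $n/2$.

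The only genuinely delicate point is the independence manipulation in the second step, namely identifying $\Delta_i$ with $\EE_{\le i}[\,Z - \EE_i[Z]\,]$ and thereby reducing each martingale-difference term to the conditional variance $\text{Var}_i(Z)$; this is where independence of the coordinates is essential and where a careless argument could go wrong. Everything else---orthogonality of martingale differences, conditional Jensen, and the identity $\text{Var}(W)=\tfrac12\EE[(W-W')^2]$---is routine once that rewriting is in place, and the symmetry hypothesis enters only at the very end to replace the sum by $n$ copies of one term.
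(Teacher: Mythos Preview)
The paper does not prove this lemma at all; it is simply quoted as the classical Efron--Stein inequality and then applied. Your argument is the standard proof: martingale-difference (Doob) decomposition to get $\text{Var}(Z)=\sum_i \EE[\Delta_i^2]$, the independence-based rewriting $\Delta_i=\EE_{\le i}[Z-\EE_i Z]$ followed by conditional Jensen to reach $\text{Var}(Z)\le\sum_i \EE[\text{Var}_i(Z)]$, the polarization identity $\text{Var}_i(Z)=\tfrac12\EE_i[(Z-Z_i')^2]$, and finally symmetry to collapse the sum. All steps are correct.

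One small point worth flagging: your last step, ``symmetry of $f$ makes every summand equal,'' uses not only that $f$ is symmetric but also that the $X_i$ are \emph{identically} distributed (so that the joint law of $(X_1,\ldots,X_n,X_i')$ is invariant under swapping the roles of coordinate $1$ and coordinate $i$). The lemma as stated says only ``independent,'' but the symmetric Efron--Stein form is always stated for i.i.d.\ variables, and the paper's sole application (to $N$ i.i.d.\ Bernoulli$(p)$ summands inside the Binomial-mechanism analysis) is in that setting. If the $X_i$ were merely independent with different laws, the $n$ summands $\EE[(Z-Z_i')^2]$ need not coincide and the $n/2$ bound could fail; you may want to make the i.i.d.\ assumption explicit.
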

We use the above two results in the next two lemmas. 
\begin{lemma}
\label{lemma:probboundbinom}
Let $\binomnoise \sim \Bin(N,p)$, $i \in [0,N]$, $t \in \integers$, $i - t \in [0,N]$.
 Then
  \[\frac{\Pr(\binomnoise = i - t)}{\Pr(\binomnoise = i)} \leq 
    \exp \left(t \cdot \log \frac{(i+1)(1-p)}{(N-i+1)p}\right)
    \] 
\end{lemma}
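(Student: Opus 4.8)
\textbf{Proof proposal for Lemma~\ref{lemma:probboundbinom}.}

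The plan is to compute the ratio of binomial PMFs directly and then bound the telescoping product that arises. Recall that for $\binomnoise \sim \Bin(N,p)$ we have $\Pr(\binomnoise = j) = \binom{N}{j} p^j (1-p)^{N-j}$ for $j \in [0,N]$. First I would handle the elementary one-step ratio: for any $j$ with $j, j+1 \in [0,N]$,
\[
\frac{\Pr(\binomnoise = j)}{\Pr(\binomnoise = j+1)} = \frac{\binom{N}{j}}{\binom{N}{j+1}} \cdot \frac{1-p}{p} = \frac{j+1}{N-j} \cdot \frac{1-p}{p}.
\]
This is the basic building block; everything else is bookkeeping on top of it.

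Next I would treat the two cases $t \geq 0$ and $t < 0$ separately (the statement allows $t \in \integers$). For $t \geq 0$, write $\Pr(\binomnoise = i-t)/\Pr(\binomnoise = i)$ as the product of $t$ consecutive one-step ratios, namely $\prod_{\ell=1}^{t} \Pr(\binomnoise = i-\ell)/\Pr(\binomnoise = i-\ell+1)$, each factor of the form $\frac{(i-\ell+1)(1-p)}{(N-i+\ell)p}$. Since $\ell \geq 1$, each such factor is at most $\frac{(i+1)(1-p)}{(N-i+1)p}$ because the numerator $i-\ell+1$ is decreasing in $\ell$ and the denominator $N-i+\ell$ is increasing in $\ell$. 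Hence the product is at most $\left(\frac{(i+1)(1-p)}{(N-i+1)p}\right)^t = \exp\!\left(t \log \frac{(i+1)(1-p)}{(N-i+1)p}\right)$, which is exactly the claimed bound. For $t < 0$, set $t = -u$ with $u \geq 1$; now the ratio is $\prod_{\ell=1}^{u} \Pr(\binomnoise = i+\ell)/\Pr(\binomnoise = i+\ell-1) = \prod_{\ell=1}^{u}\frac{(N-i-\ell+1)p}{(i+\ell)(1-p)}$, and each factor is at most $\frac{(N-i)p}{(i+1)(1-p)} \le \frac{(N-i+1)p}{(i+1)(1-p)}$, so the product is at most $\left(\frac{(N-i+1)p}{(i+1)(1-p)}\right)^{u} = \exp\!\left(t \log \frac{(i+1)(1-p)}{(N-i+1)p}\right)$ since $t = -u$; the sign flip on the exponent matches the sign flip on the log-argument, so the same closed form holds.

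The only subtlety — and the one place to be careful rather than the "hard part" per se — is the monotonicity argument ensuring each factor in the product is dominated by the single expression $\frac{(i+1)(1-p)}{(N-i+1)p}$ (resp.\ its reciprocal); this needs the hypothesis $i-t \in [0,N]$ so that all indices appearing in the product lie in the valid range $[0,N]$ and the one-step ratio formula applies at every step. I do not anticipate a genuine obstacle here; the statement is a clean consequence of the explicit binomial PMF and telescoping, and the bound is deliberately loose (replacing each varying factor by its extreme value) so that the final form is a single clean exponential suitable for plugging into the differential-privacy argument that follows.
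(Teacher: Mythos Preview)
Your proposal is correct and is essentially the same argument as the paper's: the paper writes the ratio directly as $\frac{i!(N-i)!}{(i-t)!(N-i+t)!}\left(\frac{1-p}{p}\right)^t$ and then bounds it by $\left(\frac{(i+1)(1-p)}{(N-i+1)p}\right)^t$ ``by considering the two cases when $t$ can be positive or negative,'' which is exactly your telescoping-product-plus-monotonicity argument spelled out. You have simply made the two-case monotonicity step explicit where the paper leaves it as a one-line remark.
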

\begin{proof}
  \begin{align}
  \label{eqn:probratiobinom}
    \frac{\Pr(\binomnoise = i - t)}{\Pr(\binomnoise = i)} &\defeq \frac{\binom{N}{i-t}}{\binom{N}{i}} \frac{p^{i-t}(1-p)^{N - i + t}}{p^{i}(1-p)^{N - i}} \nonumber \\
        &= \frac{i! (N - i)!}{(i-t)! (N - i + t)!} \left(\frac{1-p}{p}\right)^t\nonumber \\
          &\leq \left(\frac{(i+1)(1-p)}{(N - i+1)p}\right)^t \nonumber ,
            \end{align}
 where the inequality follows from considering the two cases when $t$ can be positive or negative. 
\end{proof}



\begin{lemma}
\label{lem:new_log_bound}
Let $t_1,t_2,\ldots t_d$ be $d$ real numbers. Let $v_i\sim \Bin(N,p)$ independently such that $Np(1-p) \geq 39$. Let 
$A$ be the event that $\norm{v_i - Np}_\infty\leq \beta$ for some $\beta$, such that $\beta \leq N\min(p,1-p)/3$. Then for any $\delta$, with probability $\geq 1-\delta$ conditioned on $A$, 
\begin{align*}
& \sum^d_{i=1}t_i \left( \cdot \log \frac{(v_i+1)(1-p)}{(N-v_i+1)p} 
  - \frac{v_i + 1}{Np}  + \frac{N-v_i +1}{N(1-p)}
\right)  \\
& \leq \frac{2\norm{t}_1(p^2+(1-p)^2)}{3 Np(1-p)(\Pr(A))} +
\frac{\norm{t}_2 c_p}{Np(1-p)\sqrt{\Pr(A)}}
\cdot \sqrt{ \log \frac{1}{\delta}} + 
\frac{4 \norm{t}_\infty(\beta+1)^2(p^2 + (1-p)^2)}{9N^2p^2(1-p)^2} \log \frac{1}{\delta},
\end{align*}
where $c_p$ is given by
\begin{equation}
\label{eq:cp}
c_p \triangleq  \sqrt{2} (3p^3 + 3(1-p)^3 + 2p^2 + 2(1-p)^2).
\end{equation}
\end{lemma}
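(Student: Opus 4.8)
The idea is to recognize the bracketed quantity as a smooth, second-order function of $v_i$ and then apply Bernstein's inequality (\lemmaref{thm:bernstein}) to the resulting sum of independent terms. Setting $x_i\defeq v_i-Np$ and clearing the factor $Np(1-p)$ from inside the logarithm, one checks that the $i$-th summand equals
\[
g(v_i)\;\defeq\;\phi\!\left(\tfrac{v_i+1}{Np}\right)-\phi\!\left(\tfrac{N-v_i+1}{N(1-p)}\right),\qquad \phi(z)\defeq\log z-z,
\]
equivalently $g(v_i)=h(U_{1,i})-h(U_{2,i})$ with $h(u)\defeq\log(1+u)-u$, $U_{1,i}=\tfrac{x_i+1}{Np}$ and $U_{2,i}=\tfrac{1-x_i}{N(1-p)}$. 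On the event $A$, $|x_i|\le\beta\le N\min(p,1-p)/3$, and since $Np(1-p)\ge39$ forces $Np,N(1-p)\ge39$, both $|U_{1,i}|$ and $|U_{2,i}|$ are at most $\tfrac13+\tfrac1{39}<1$; there $h$ is smooth and satisfies $h(u)\le0$, $|h(u)|\le\tfrac{u^2}{2(1-|u|)}$, and $\bigl|h(u)+\tfrac{u^2}{2}\bigr|\le\tfrac{|u|^3}{3(1-|u|)}$. Because $\phi$ is flat to first order at $z=1$, $g(v_i)=\cO\!\bigl((Np(1-p))^{-1}\bigr)$ on $A$, which is why $Np(1-p)$ ends up in the denominators of the claim.

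The $v_i$ are independent, hence so are the $t_i\,g(v_i)$, and conditioning on the product event $A$ keeps them independent with $\EE[g(v_i)\mid A]=\EE\bigl[g(v_i)\mid|v_i-Np|\le\beta\bigr]$. I would split
\[
\sum_{i=1}^d t_i\,g(v_i)=\sum_{i=1}^d t_i\bigl(g(v_i)-\EE[g(v_i)\mid A]\bigr)+\sum_{i=1}^d t_i\,\EE[g(v_i)\mid A],
\]
bound the deterministic second sum by $\norm{t}_1\max_i\bigl|\EE[g(v_i)\mid A]\bigr|$, and apply Bernstein conditionally on $A$ to the centered first sum; its two error terms $\sqrt{2\sum_i t_i^2\var(g(v_i)\mid A)\log\tfrac1\delta}$ and $\tfrac23\norm{t}_\infty M\log\tfrac1\delta$ (with $M$ a uniform bound on the centered summands over $A$) become exactly the $\norm{t}_2$- and $\norm{t}_\infty$-terms of the claim. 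Three estimates remain, and all three are computed by using the Taylor decomposition $g(v_i)=\tfrac12(U_{2,i}^2-U_{1,i}^2)+r_i$ with $|r_i|\le\tfrac23(|U_{1,i}|^3+|U_{2,i}|^3)$ (valid on $A$), writing $U_{1,i},U_{2,i}$ in terms of $x_i$ and taking $\EE[\,\cdot\,\mid A]$, which reduces everything to the low-order (absolute) central moments of $\Bin(N,p)$ — $\EE x_i=0$, $\EE x_i^2=Np(1-p)$, $\EE x_i^3=Np(1-p)(1-2p)$, $\EE x_i^4\le 3(Np(1-p))^2+Np(1-p)$, $\EE|x_i|^3=\cO((Np(1-p))^{3/2})$, $\EE|x_i|^6=\cO((Np(1-p))^3)$ — divided by $\Pr(A)$ (using $\mathbf{1}_A\le1$ and $\Pr(A)\le\Pr(|v_i-Np|\le\beta)$). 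No tail estimate is ever needed because the Taylor form is only used on $A$.

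\emph{(i) Conditional variance.} $\var(g(v_i)\mid A)\le\EE[g(v_i)^2\mid A]\le\EE\bigl[\tfrac12(U_{2,i}^2-U_{1,i}^2)^2+2r_i^2\mid A\bigr]$; expanding and inserting the moments above gives $\EE[g(v_i)^2\mid A]\le\tfrac{c_p^2}{2(Np(1-p))^2\Pr(A)}$ with $c_p$ as in \eqref{eq:cp}, the leading contribution being $\propto(p^2-(1-p)^2)^2$ (which vanishes at $p=\tfrac12$, leaving $c_p=5/2$) — indeed the formula for $c_p$ is chosen precisely to dominate this expansion uniformly over $p\le\tfrac12$. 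Via Bernstein this yields $\tfrac{c_p\norm{t}_2}{Np(1-p)\sqrt{\Pr(A)}}\sqrt{\log\tfrac1\delta}$. \emph{(ii) Conditional mean.} Here the linear term cancels ($\EE x_i=0$), so $|\EE[g(v_i)\mid A]|\le\tfrac12\bigl|\EE[U_{2,i}^2-U_{1,i}^2\mid A]\bigr|+\EE[|r_i|\mid A]$, and the moments give the leading $\tfrac{p^2-(1-p)^2}{2Np(1-p)}$ plus $\cO((Np)^{-3/2})$, all over $\Pr(A)$; using $|p^2-(1-p)^2|\le p^2+(1-p)^2$ and letting $Np(1-p)\ge39$ swallow the lower-order terms gives $|\EE[g(v_i)\mid A]|\le\tfrac{2(p^2+(1-p)^2)}{3Np(1-p)\Pr(A)}$, hence the $\norm{t}_1$-term. \emph{(iii) Uniform bound.} Since $h(U_{1,i})\le0$ and $-h(U_{2,i})\ge0$ have opposite signs, $|g(v_i)|\le\max\{|h(U_{1,i})|,|h(U_{2,i})|\}\le\tfrac{(\beta+1)^2}{2(1-14/39)}\max\{(Np)^{-2},(N(1-p))^{-2}\}$, and combined with $|\EE[g(v_i)\mid A]|$ from (ii) this bounds $M$ by $\cO\!\bigl(\norm{t}_\infty(\beta+1)^2(p^2+(1-p)^2)/(N^2p^2(1-p)^2)\bigr)$; the explicit constants give the $\tfrac49$ coefficient after the $\tfrac23$ in Bernstein. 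Combining (i)--(iii) proves the lemma.

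The main obstacle is the constant bookkeeping: one must verify that the closed form for $c_p$ in \eqref{eq:cp} genuinely dominates $\sqrt{2(Np(1-p))^2\var(g(v_i)\mid A)}$ uniformly in $p\le\tfrac12$ (it is loose away from $p=\tfrac12$ and tight there), keep the Taylor remainders and all the lower-order moment contributions within the $\tfrac43$-type slack afforded by $Np(1-p)\ge39$, and obtain the $\tfrac49$ uniform coefficient from the exact bound on $|g|$ over the window rather than a crude one. Everything else is the routine bookkeeping of binomial moments; the hypotheses $\beta\le N\min(p,1-p)/3$ and $Np(1-p)\ge39$ are exactly what keeps $|U_{1,i}|,|U_{2,i}|<1$ so that the second-order expansion and all these constants go through.
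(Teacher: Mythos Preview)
Your plan is correct and follows the same high-level structure as the paper: bound the conditional mean, the conditional variance, and a uniform range $M$ for each summand, then apply Bernstein's inequality (\lemmaref{thm:bernstein}) conditionally on $A$. The decomposition $g(v_i)=h(U_{1,i})-h(U_{2,i})$ with $h(u)=\log(1+u)-u$ and the second-order Taylor picture you set up are exactly right, and your treatment of the mean term and the uniform bound $M$ is essentially the paper's (the paper uses the slightly cruder $|\log(1+z)-z|\le 2z^2/3$ on $z\ge -1/3$ in place of your $|h(u)|\le u^2/(2(1-|u|))$, and bounds $|g|$ by the sum $|h(U_1)|+|h(U_2)|$ rather than your sharper $\max$, but this only affects constants and both land on $M=\tfrac{2}{3}(\beta+1)^2(p^2+(1-p)^2)/(N^2p^2(1-p)^2)$).

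The one genuine methodological difference is in the variance step. You bound $\var(g(v_i)\mid A)\le \EE[g(v_i)^2\mid A]$ and then compute the second moment via your Taylor expansion and the low central moments of the binomial. The paper instead invokes the Efron--Stein inequality (\lemmaref{lem:efron}): it writes $v_i=\sum_{j=1}^N X_i(j)$ as a sum of $N$ independent Bernoulli$(p)$ variables and bounds the conditional variance by $\tfrac{N}{2}$ times the expected squared change in $g(v_i)$ when a single Bernoulli is resampled. That one-flip difference collapses to $\EE\bigl[\log(1+\tfrac{1}{w+1})+\log(1+\tfrac{1}{N-w})-\tfrac{1}{Np(1-p)}\bigr]^2$ with $w\sim\Bin(N-1,p)$, and elementary negative-moment identities such as $\EE[w!/(w+i)!]\le (Np)^{-i}$ then produce \emph{exactly} the constant $3p^3+3(1-p)^3+2p^2+2(1-p)^2$ that defines $c_p$ in \eqref{eq:cp}. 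Your direct second-moment route is valid and in fact yields a tighter leading term $\propto (p^2-(1-p)^2)^2/(N^2p^2(1-p)^2)$ (vanishing at $p=\tfrac12$, as you observe), so it certainly sits under the stated $c_p$; the tradeoff is that Efron--Stein delivers the closed form of $c_p$ with no slack left to check, whereas your route requires precisely the constant bookkeeping you flag at the end.
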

\begin{proof}
Since $\beta \leq N\min(p,1-p)/3$ and for any $z \geq -1/3$, $|\log (1+z) - z| \leq 1.95z^2/3$,
\begin{align*}
\left \lvert \log \frac{(v_i+1)(1-p)}{(N-v_i+1)p}  - \frac{v_i + 1}{Np}  + \frac{N-v_i +1}{N(1-p)} \right \rvert
& \leq \frac{1.95}{3} \left \lvert \frac{v_i +1 - Np}{Np} \right \vert^2 +
\frac{1.95}{3}\left \lvert \frac{N - v_i +1 -N - Np}{N(1-p)}  \right \rvert^2.
\end{align*}
Hence we can bound the expectation as
\begin{align*}
 & \EE \left[\log \frac{(v_i+1)(1-p)}{(N-v_i+1)p} - \frac{v_i + 1}{Np}  + \frac{N-v_i +1}{N(1-p)} \bigg| A \right] \\
& \leq \EE \left[
 \frac{1.95}{3} \left \lvert \frac{v_i +1 - Np}{Np} \right \vert^2 +
\frac{1.95}{3}\left \lvert \frac{N - v_i +1 -N - Np}{N(1-p)}  \right \rvert^2
\bigg| A \right] \\
& \stackrel{(a)}{\leq}\frac{1}{\Pr(A)} \cdot \EE \left[
 \frac{1.95}{3} \left \lvert \frac{v_i +1 - Np}{Np} \right \vert^2 +
\frac{1.95}{3}\left \lvert \frac{N - v_i +1 -N - Np}{N(1-p)}  \right \rvert^2
\right] \\
& \stackrel{(b)}{\leq} \frac{1}{\Pr(A)} \cdot \frac{2(p^2 + (1-p)^2)}{3Np(1-p)},
\end{align*}
Where $(a)$ uses the fact that for any positive random variable $X$ and any event $A$, $\EE[X]
\geq \Pr(A) \EE[X|A]$. $(b)$ uses the fact that $Np(1-p) \geq 39$. 
Note that the function we are considering is a sum of functions of $d$ independent binomial random variables and hence we can apply Bernstein' inequality. To this end, we bound $\sigma^2_i$ and $M$. 
Since $\norm{v_i - Np}_\infty$ is bounded, 
\begin{align*}
\left \lvert \log \frac{(v_i+1)(1-p)}{(N-v_i+1)p}  - \frac{v_i + 1}{Np}  + \frac{N-v_i +1}{N(1-p)} \right \rvert
& \leq \frac{2}{3} \left \lvert \frac{v_i +1 - Np}{Np} \right \vert^2 +
\frac{2}{3}\left \lvert \frac{N - v_i +1 -N - Np}{N(1-p)}  \right \rvert^2 \\
& \leq \frac{2}{3}\frac{(\beta + 1)^2(p^2 + (1-p))^2}{N^2p^2(1-p)^2},
\end{align*}
where the first inequality follows from the fact that $\beta \leq N\min(p,1-p)/3$ and for any $z \geq -1/3$, $|\log (1+z) - z| \leq 2z^2/3$. Hence we can set $M =  \frac{2}{3}\frac{(\beta + 1)^2(p^2 + (1-p))^2}{N^2p^2(1-p)^2}$. We now bound the variance:
\begin{align*}
\text{Var} \left(\sum^d_{i=1}t_i \cdot \log \frac{(v_i+1)(1-p)}{(N-v_i+1)p} -   \frac{v_i + 1}{Np}  + \frac{N-v_i +1}{N(1-p)} \bigg| A  \right).
\end{align*}
We now bound $\sigma^2_i$. Observe that the term corresponding to $i$, 
 is a function of $n$ independent Bernoulli $p$ random variables $X_i(j)$, for $1 \leq j \leq d$. We bound the expected square change in the function for any of these variables $X_i(j)$ and then use Efron-Stein inequality. Let $\EE_A$ denote the expectation conditioned on the event $A$. Without loss of generality we first consider the contribution of the term $X_i(j)$.  
 
 Let $w = \sum^n_{j' \neq j} X_i(j')$, then
\begin{align*}
& \EE_A \left[
t_i \cdot \log \frac{(v_i+1)(1-p)}{(N-v_i+1)p} -   \frac{v_i+ 1}{Np}  + \frac{N-v_i +1}{N(1-p)} - t_i \cdot \log \frac{(v'_i+1)(1-p)}{(N-v'_i+1)p} -   \frac{v'_i + 1}{Np}  + \frac{N-v'_i +1}{N(1-p)}
\right]^2 \\
& = t^2_i\EE_A \left[
\cdot \log \frac{(w+X_i(j)+1)(1-p)}{(N-w-X_i(j)+1)p} -   \frac{w+X_i(j) + 1}{Np}  + \frac{N-w-X_i(j) +1}{N(1-p)} \right]  \\
& - t^2_i\EE_A \left[ \cdot \log \frac{(w+X'_i(j)+1)(1-p)}{(N-w-X'_i(j)+1)p} -   \frac{w+X'_i(j) + 1}{Np}  + \frac{N-w-X'_i(j) +1}{N(1-p)}
\right]^2 \\
& \stackrel{(a)}{=}2 t^2_i p(1-p) \EE_A \left[  \log \left( 1+\frac{1}{w+1}\right) +  \log \left( 1+\frac{1}{N-w}\right)  - \frac{1}{Np} - \frac{1}{N(1-p)} 
\right]^2 \\
& \stackrel{(b)}{\leq} 2 t^2_i p(1-p) \EE \left[  \log \left( 1+\frac{1}{w+1}\right) +  \log \left( 1+\frac{1}{N-w}\right)  - \frac{1}{Np} - \frac{1}{N(1-p)} 
\right]^2  \cdot \frac{1}{\Pr(A)}\\
& =2 t^2_i p(1-p) \EE \left[  \log \left( 1+\frac{1}{w+1}\right) +  \log \left( 1+\frac{1}{N-w}\right)  - \frac{1}{Np(1-p)}
\right]^2  \cdot \frac{1}{\Pr(A)}, \\
\end{align*}
where $(a)$ uses the fact that the term is non-zero only if $X_i(j)=1, X'_i(j) = 0$ or $X_i(j)=0, X'_i(j)=1$ and the probability of this event is $2p(1-p)$. $(b)$ uses the fact that for any positive random variable $X$ and any event $A$, $\EE[X]
\geq \Pr(A) \EE[X|A]$. 
We first  upper bound the term inside the expectation:
\begin{align*}
& \left( \log \left( 1+\frac{1}{w+1}\right) +  \log \left( 1+\frac{1}{N-w}\right)  - \frac{1}{Np(1-p)}\right)^2 \\
& = \left( \log \left( 1+\frac{1}{w+1}\right) +  \log \left( 1+\frac{1}{N-w} \right) \right)^2 
+ \frac{1}{N^2p^2(1-p)^2} - \\
& \qquad \qquad \qquad \qquad \qquad  \qquad \qquad \frac{2}{Np(1-p)}  \left( \log \left( 1+\frac{1}{w+1}\right) +  \log \left( 1+\frac{1}{N-w}\right) \right) \\
 & \leq \frac{1}{(w+1)^2} + \frac{1}{(N-w)^2} + \frac{2}{w(N-w)} + \frac{1}{N^2p^2(1-p)^2} - \\
 & \qquad \qquad \qquad \qquad \qquad  \qquad \qquad\frac{2}{Np(1-p)}  \left(
 \frac{1}{w+1} - \frac{1}{2(w+1)^2} + \frac{1}{N-w} - \frac{1}{2(N-w)^2}
 \right) \\
 & = \frac{1}{(w+1)(w+2)} - \frac{2}{Np(1-p)} \frac{1}{w+1} + \frac{1}{(N-w)(N-w+1)} - \frac{2}{Np(1-p)} \frac{1}{N-w} \\
 &+ \frac{2}{w(N-w)} \\
& + \frac{1}{(w+1)^2(w+2)} + \frac{1}{(N-w)^2(N-w+1)} + \frac{1}{Np(1-p)} \left( \frac{1}{(w+1)^2} + \frac{1}{(N-w)^2}\right) \\
& + \frac{1}{N^2p^2(1-p)^2},
\end{align*}
where the inequality uses the fact that for any positive $x$, $x - x^2/2 \leq \log x \leq x$. 
Observe that $w \sim \Bin(n-1,p)$ and $N-1-w \sim \Bin(n-1,1-p)$. We use the following three inequalities, to bound the expectation of the term above. Similar results apply for $N-w$ as $N-1-w \sim \Bin(n-1,1-p)$. Since $1/w$ and $1/(N-w)$ are negatively correlated,
\[
\EE\left[ \frac{1}{w(N-w)}\right] \leq 
\EE\left[ \frac{1}{w}\right] \cdot  \EE\left[ \frac{1}{N-w}\right].
\]
Furthermore, for any $i$
\[
\EE\left[ \frac{w!}{(w+i)!} \right]\leq \frac{1}{(Np)^i}
\]
and if $Np(1-p) \geq 2$,
\[
\EE \left[ \frac{1}{(w+1)(w+2)} - \frac{2}{Np(1-p)} \frac{1}{w+1} \right]
\leq 
\frac{1}{(Np)^2} - \frac{2}{N^2p^2(1-p)}.
\]
Combining the above results and simplifying the terms, we get that the expectation of the required quantity is bounded by 
\begin{align*}
& = \frac{1}{N^3p^3(1-p)^3}  \cdot (3p^3 + 3(1-p)^3 + 2p^2 + 2(1-p)^2).
\end{align*}
Hence $\sigma^2_i$ is bounded by 
\[
\frac{1}{\Pr(A)} \cdot  \frac{t^2_i }{N^2p^2(1-p)^2} \cdot (3p^3 + 3(1-p)^3 + 2p^2 + 2(1-p)^2),
\]
and the lemma follows by Bernstein's inequality.
\end{proof}
\begin{proof} [Proof of Theorem \ref{thm:discbinomial}]

Firstly note that it is sufficient to consider the differential privacy of the quantity $\frac{f(D)}{s} + Z$ where $Z$ is a Binomial random variable. Note that since $s$ is defined to be $1/j$ for some integer $j$ the output $f(D)/s$ remains integral. Further note that in this setting the $l_q$ norm sensitivity scales $\Delta_q/s$. The above reduction shows that the scale $s$ can be considered to be 1 in the rest of the proof. 

Consider any two neighboring data sets $D_1, D_2$ and let $\binomdelta \defeq f(D_2) - f(D_1)$. Note that showing the $(\epsilon, \delta)$ differential privacy of the Binomial mechanism is equivalent to showing the following. Let $T$ be a vector such that $T(j) \sim \Bin(N,p)$ then for any vector $v \in [N]^d$ we have that
\[\Pr(T = v) \leq e^{\epsilon}Pr(T = v - \binomdelta) + \delta\]
To show the above we will first define a set $V$ such that 
\[
\Pr(T \in V)\geq 1- \delta,
\]
and for every element $v \in V$, 
\[
\Pr(T = v ) \leq e^{\epsilon}\Pr( T = v - \binomdelta).
\]
Define $V$ as follows: $v \in V$ if and only if,
\begin{gather}
\norm{v - Np}_\infty \leq \beta \triangleq \sqrt{2Np(1-p)\log(20d/\delta)} + \frac{2}{3} \max(p,1-p) \log \frac{20d}{\delta}.\label{eqn:scond1}\\
  | \binomdelta \cdot(v - Np)| \leq \|\binomdelta\|_2 \sqrt{ 2Np(1-p)\log(1.25/\delta)} + \frac{2}{3} \log(1.25/\delta) \|\binomdelta\|_{\infty}.
    \label{eqn:scond2}\\
\forall j, \,\,    v(j) - \binomdelta(j) \in [0, N] 
 \text{ and } v(j) \in Np \pm Np(1-p)/3. \nonumber \\
 \sum^d_{i=1} \Delta(j) \cdot \left( \log \frac{(v(j) + 1)(1-p)}{p(N-v(j)+1)}
- \frac{v(j)+1}{Np} + \frac{N-v(j) + 1}{N(1-p)} \right) \nonumber \leq 
\frac{2\norm{\Delta}_1(p^2+(1-p)^2)}{3 Np(1-p)(1-\delta/10)} \\ \qquad +
\frac{\norm{\Delta}_2 c_p}{Np(1-p)\sqrt{1-\delta/10}}
\cdot \sqrt{ \log \frac{10}{\delta}} + 
\frac{4 \norm{\Delta}_\infty(\beta+1)^2(p^2 + (1-p)^2)}{9N^2p^2(1-p)^2} \log \frac{10}{\delta}.
   \label{eqn:scond3}
\end{gather}
We will first show that the probability of this event is large. 

The first condition follows from Bernstein's inequality with probability $\geq 1- \delta/10$. 
For the second condition, observe that $\binomdelta \cdot(s - Np)$ is a function of $Nd$ independent random variables. 
A direct application of Bernstein's inequality yields that Equation~\eqref{eqn:scond2} holds with probability $\geq 1-\delta/1.25$.
The third condition follows from the first condition as $\norm{\Delta}_\infty\leq Np - \beta$ 
and $ Np(1-p)/3 \geq \beta$. Applying Lemma~\ref{lem:new_log_bound} with $A$ being event that $\norm{v-Np}_\infty \leq \beta$ and $\delta = \delta/10$, yields that the fourth equation holds with probability at least $1 - \delta/10$.
Hence, by the union bound,
\[
\Pr(T \notin V)\leq \delta.
\]
We now prove the ratio of probabilities. For any $v$,
\begin{align*}
& \frac{\Pr( T = v-\Delta)}{\Pr( T = v)} \\
& = \prod^d_{i=1}\frac{\Pr( T(j) = v(j)-\Delta(j))}{\Pr( T(j) = v(j))} \\
& \leq \exp \left(  \sum^d_{i=1} \Delta(j) \cdot \log \frac{(v(j) + 1)(1-p)}{p(N-v(j)+1)}\right) \\
& = \exp \left(
\sum^d_{i=1} \frac{\Delta(j)(v(j)-Np)}{Np(1-p)} + 
\sum^d_{i=1} \Delta(j) \cdot \left( \log \frac{(v(j) + 1)(1-p)}{p(N-v(j)+1)}
- \frac{v(j)+1}{Np} + \frac{N-v(j) + 1}{N(1-p)} \right) \right.\\
&\left. \qquad\qquad\qquad\qquad\qquad\qquad\qquad\qquad\qquad\qquad\qquad\qquad + \frac{\sum^d_{j=1} \Delta(j)(1-2p)}{Np(1-p))}
\right)
\end{align*}
where the inequality follows from Lemma \ref{lemma:probboundbinom}. Since $v\in V$, applying Equations~\eqref{eqn:scond1},~\eqref{eqn:scond2},~\eqref{eqn:scond3}, together with the fact that $\beta \leq \sqrt{2.5 Np(1-p) \log (20d/\delta)}$ (by the assumptions in the theorem) yields the following bound on the exponent. 
\begin{multline*}
\norm{\Delta}_2 
\cdot 
\sqrt{\frac{2 \log \frac{1.25}{\delta}}{Np(1-p)}} + 
\frac{2\norm{\Delta}_\infty }{3Np(1-p)} \log \frac{1.25}{\delta}
+ \frac{\norm{\Delta}_2c_p \sqrt{\log \frac{10}{\delta}}}{Np(1-p)\sqrt{1-\delta/10}}
+ \frac{\norm{\Delta}_\infty d_p \log \frac{20d}{\delta} \log \frac{10}{\delta}}{Np(1-p)}
  \\ + \frac{b_p \|\Delta\|_1}{Np(1-p)(1-\delta/10)},
\end{multline*}
where $c_p$ is defined in Equation~\eqref{eq:cp} and 
\begin{equation}
\label{eq:dp}
d_p \triangleq \frac{4}{3} \cdot (p^2 + (1-p)^2)
\end{equation}
and 
\begin{equation}
\label{eq:bp}
b_p \triangleq   \frac{2(p^2+(1-p)^2)}{3} + (1-2p).
\end{equation}
\end{proof}

\section{High probability sensitivity Proof}
\label{sec:highprobsensitivity}

\begin{proof}
  To show $(\epsilon,\delta + \delta')$ differential privacy we need to show that for any two neighboring data sets $D_1, D_2$ and $O \subseteq \mathcal{O}$, 
  \[\Pr(\mathcal{M}(f(D_1)) \in O) \leq e^{\epsilon}\Pr(\mathcal{M}(f(D_2)) \in O) + \delta + \delta'.\]
  Given any two neighboring data sets $D_1, D_2$ let $\Pr_{\Delta_Q, \delta}(X_1, X_2)$ represent the joint distribution of the coupled random variables $X_1, X_2$ guaranteed by Definition \ref{defn:highprobsense}. Now for any $O \in \mathcal{O}$ we have that 
  \begin{align*}
    Pr(\mathcal{M}(f(D_1)) \in O) &\defeq \int_{s \in \mathcal{S}} \Pr(f(D_1) = s)\Pr(\mathcal{M}(s) \in O) \\ 
    & \stackrel{(a)}{=} \left(\int_{s_1,s_2 | \|s_1 - s_2\|_Q \leq \Delta_Q} \Pr_{\Delta_Q, \delta}(s_1,s_2)(\Pr(\mathcal{M}(s_2) \in O)\right) \\
    &\qquad\qquad\qquad\qquad+ \left(\int_{s_1,s_2 | \|s_1 - s_2\|_Q \geq \Delta_Q} \Pr_{\Delta_Q, \delta}(s_1,s_2)(\Pr(\mathcal{M}(s_2) \in O)\right)\\
    & \stackrel{(b)}{=} \left(\int_{s_1,s_2 | \|s_1 - s_2\|_Q \leq \Delta_Q} \Pr_{\Delta_Q, \delta}(s_1,s_2)(\Pr(\mathcal{M}(s_2) \in O)\right) + \delta\\
    &\stackrel{(c)}{\leq} \left(\int_{s_1,s_2 | \|s_1 - s_2\|_Q \leq \Delta_Q} \Pr_{\Delta_Q, \delta}(s_1,s_2)(e^{\epsilon}\Pr(\mathcal{M}(s_2) \in O) + \delta) \right)+ \delta' \\
    &\stackrel{(d)}{\leq} e^{\epsilon}\left(\int_{s \in \mathcal{S}} \Pr(f(D_2) = s)\Pr(\mathcal{M}(s) \in O) \right) + \delta + \delta' \\
    &\defeq e^{\epsilon}\Pr(\mathcal{M}(f(D_2)) \in O) + \delta + \delta'.
  \end{align*}
    In the above $(a),(d)$ follow from the fact that $\Pr_{\Delta_q, \delta}$ is a coupling, $(b)$ follows from the condition \eqref{eqn:couplingcond} guaranteed by the coupling and $(c)$ follows from the $(\epsilon, \delta)$ differential privacy guarantee of the mechanism $\mathcal{M}$. 
\end{proof}

\section{Application of Binomial Mechanism to Distributed Mean Estimation - Proof of Theorem \ref{thm:MSEbinomial}}
\label{sec:dmebinmechproof}
\begin{proof}[Proof of Theorem \ref{thm:MSEbinomial}]

We refer the readers to the definition of the protocol (Section \ref{sec:BINdescription}) and in particular the definitions of the random variables $\binomquantui_i,\binomnoise_i,$ and the estimator $\binomquantestimator$ given in equations \eqref{eqn:binomquantui} and \eqref{eqn:binomquantestimator} respectively. 

The communication complexity follows immediately by noting that the protocol only transmits integers in the range $[0, k+m)$ and therefore only needs $\log(k+m)$ bits. We now prove the bound on the Mean Square Error of the protocol and then prove the sensitivity guarantee. 
\\
\\
\noindent \textbf{Mean Square Error}
  \begin{align*}
  \norm{\hat{\bar{X}} - \bar{X}}^2_2 &=\frac{1}{n^2}\sum_{j = 1}^{d} \sum_{i = 1}^{n} \expect[(\hat{\bar{X}}_i(j) - X_i(j))^2] \\
    &\leq \frac{1}{n^2}\sum_{j = 1}^{d} \sum_{i = 1}^{n} \expect\left[\left(\frac{2X^{\max}}{k-1}\right)^2\left(\var(\Ber(p_i(j))) + \var(\Bin(mp)) \right)\right] \\
    &\leq (2X^{\max})^2 \left( \frac{d}{4n(k-1)^2} + \frac{d}{n^2}\frac{mnp(1-p)}{(k-1)^2} \right),
  \end{align*}
    where the equality follows from the fact that $\hat{\bar{X}}_i(j)$ are independent of each other and $\hat{\bar{X}}$ is an unbiased estimator of $\hat{\bar{X}}$.
Setting $m,p,k$ as defined in the theorem proves the bound on MSE.
\\
\\
\noindent \textbf{Differential Privacy}
\\
\\
Given two neighboring data sets $X \defeq \{X_1 \ldots X_n\}$ and $X_{\otimes n} \defeq \{X'_1 \ldots X'_n\}$ (where $X'_i = X_i$ for $i \in [1,n-1]$) we will first provide a high probability bound on the $\ell_1, \ell_2, \ell_{\infty}$ sensitivity of quantization protocol $\quantkprotocol$. In particular the following lemma provides the high probability sensitivity bounds. 
\begin{lemma}
\label{lemma:sensitivityproof}
  For every $\delta$, given two neighboring data sets $X \defeq \{X_1 \ldots X_n\}$ and $X_{\otimes n} \defeq \{X'_1 \ldots X'_n\}$ (where $X'_i = X_i$ for $i \in [1,n-1]$) we have that the protocol $\quantkprotocol$ is $(\{\Delta_1, \Delta_2, \Delta_\infty\}, \delta)$-sensitive (c.f. Definition \ref{defn:highprobsense}) where $\Delta_1, \Delta_2, \Delta_{\infty}$ satisfy the following equations.
  \begin{equation}
  \Delta_{\infty} \leq \frac{\|X_n - X'_n\|_{\infty}}{2X^{\max}/(k-1)} + 2
\end{equation}
\begin{equation}
  \Delta_1 \leq \frac{\|X_n - X'_n\|_1}{2X^{\max}/(k-1)} + \sqrt{2\frac{\|X_n - X'_n\|_1 \log(2/\delta)}{2X^{\max}/(k-1)}} + \frac{4}{3}\log(2/\delta) 
\end{equation}
\begin{equation}
  \Delta_2 \leq \frac{\|X_n - X'_n\|_2}{2X^{\max}/(k-1)} + \sqrt{\frac{\|X_n - X'_n\|_1}{2X^{\max}/(k-1)} + \sqrt{\frac{8\|X_n - X'_n\|_1 \log(2/\delta)}{2X^{\max}/(k-1)}}+  \frac{4}{3}\log(2/\delta) }.
\end{equation}
\end{lemma}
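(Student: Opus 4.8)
The plan is to exhibit an explicit coupling of the two executions of $\quantkprotocol$ on the neighbouring datasets $X=\{X_1,\dots,X_n\}$ and $X_{\otimes n}=\{X_1,\dots,X_{n-1},X_n'\}$, and then bound the three norms of the resulting difference by concentration. Write $q\defeq 2X^{\max}/(k-1)$ for the bin width, and for client $i$ and coordinate $j$ let $r_i(j)\in\{0,\dots,k-1\}$ be the (random) bin index chosen by $i$, so that $\binomquantui_i(j)=B(r_i(j))$, $\EE[r_i(j)]=y_i(j)\defeq (X_i(j)+X^{\max})/q$, and $|r_i(j)-y_i(j)|<1$ surely; the quantity fed into the Binomial mechanism is the integer vector $\sum_{i=1}^n r_i(\cdot)$, whose sensitivity we must control. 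Since $X$ and $X_{\otimes n}$ agree on clients $1,\dots,n-1$, couple those clients to use identical private randomness in the two runs, so that $r_i=r_i'$ for $i<n$. For client $n$ use the comonotone (shift) coupling: draw one vector $\xi\in[0,1]^d$ of i.i.d.\ uniforms, shared by both runs, and set $r_n(j)=\lfloor y_n(j)+\xi_j\rfloor$ and $r_n'(j)=\lfloor y_n'(j)+\xi_j\rfloor$. This is a valid stochastic rounding with the correct marginals (so it induces a legitimate coupling in the sense of Definition~\ref{defn:highprobsense}), $r_n(j),r_n'(j)\in\{0,\dots,k-1\}$ after clipping, and under it the only surviving term in the difference of the two outputs is $r_n-r_n'$. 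Hence it suffices to bound $\|r_n-r_n'\|_\infty$, $\|r_n-r_n'\|_1$ and $\|r_n-r_n'\|_2$ simultaneously with probability $\ge 1-\delta$.

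The $\ell_\infty$ bound is deterministic: for every $j$, $|\lfloor y_n(j)+\xi_j\rfloor-\lfloor y_n'(j)+\xi_j\rfloor|\le|y_n(j)-y_n'(j)|+1\le \|X_n-X_n'\|_\infty/q+2$, so no failure probability is spent here. For $\ell_1$, set $D_j\defeq|r_n(j)-r_n'(j)|$. Under the shift coupling $r_n(j)-r_n'(j)$ is supported on the two integers straddling $y_n(j)-y_n'(j)$, with mean $y_n(j)-y_n'(j)$; consequently the $D_j$ are independent across $j$, are integer-shifted Bernoulli, and satisfy $\EE[D_j]=|X_n(j)-X_n'(j)|/q$, $|D_j-\EE D_j|\le 1$, and $\mathrm{Var}(D_j)\le \EE D_j$. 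Bernstein's inequality (Lemma~\ref{thm:bernstein}) applied to $\sum_j D_j$ with failure probability $\delta/2$ then yields, with probability $\ge 1-\delta/2$,
\[
\|r_n-r_n'\|_1\;\le\;\frac{\|X_n-X_n'\|_1}{q}+\sqrt{2\,\frac{\|X_n-X_n'\|_1}{q}\,\log\tfrac{2}{\delta}}+\tfrac{4}{3}\log\tfrac{2}{\delta},
\]
which is the claimed $\Delta_1$. For $\ell_2$ write $r_n(j)-r_n'(j)=(y_n(j)-y_n'(j))+\zeta_j$ with $\zeta_j\defeq (r_n(j)-y_n(j))-(r_n'(j)-y_n'(j))$; then $\EE[\zeta_j]=0$, $|\zeta_j|\le 1$, the $\zeta_j$ are independent, and $\sum_j\EE[\zeta_j^2]=\sum_j\mathrm{Var}(r_n(j)-r_n'(j))\le \|X_n-X_n'\|_1/q$. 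Using $\|r_n-r_n'\|_2\le\|y_n-y_n'\|_2+\|\zeta\|_2=\|X_n-X_n'\|_2/q+\sqrt{\sum_j\zeta_j^2}$ together with a second application of Bernstein to $\sum_j\zeta_j^2$ (failure probability $\delta/2$, using $\zeta_j^2\le 1$ and $\mathrm{Var}(\zeta_j^2)\le\EE\zeta_j^2$) bounds $\sum_j\zeta_j^2$ by a quantity of the same shape as the $\ell_1$ bound, giving the stated $\Delta_2$. A union bound over the two $\delta/2$ events finishes the sensitivity claim; combining it with Theorem~\ref{thm:discbinomial} (scale $s=q$, noise $\Bin(nm,p)$) through Lemma~\ref{lemma:highprobsensitivity} yields the privacy part of Theorem~\ref{thm:MSEbinomial}.

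The crux — and the only place real care is needed — is getting the $\ell_1$ and $\ell_2$ sensitivities to be independent of the ambient dimension $d$. A naive coupling (e.g.\ independent randomness for client $n$, or coupling only the rounding Bernoullis bin by bin) can make $r_n$ and $r_n'$ differ by one bin in every one of the $d$ coordinates even when $\|X_n-X_n'\|$ is minuscule, forcing $\Delta_1=\Omega(d)$. The comonotone coupling is exactly what makes $\EE\|r_n-r_n'\|_1=\|X_n-X_n'\|_1/q$ with no $d$ factor; after that, the remaining work is the routine but slightly fiddly verification that $D_j$ and $\zeta_j^2$ stay bounded by $1$ (no matter how far apart the two bin indices are) and that their variances are $O(\EE D_j)$, so that Bernstein converts these expectations into the high-probability bounds above.
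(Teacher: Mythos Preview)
Your argument is correct and follows the same overall architecture as the paper's proof (construct a coupling for client $n$, leave clients $1,\dots,n-1$ identically coupled, then apply Bernstein once for $\ell_1$ and once for the squared fluctuation to handle $\ell_2$), but the coupling you use is genuinely different and cleaner. The paper splits into two cases depending on whether $X_n(j)$ and $X_n'(j)$ fall in the same quantization bin: when they do it uses the shared-uniform coupling you describe, but when they do not it samples the two roundings \emph{independently}, and only bounds $|Y_n(j)-Y_n^{\otimes n}(j)|$ by an auxiliary $L_{nj}$ built from a three-valued $Z_n(j)\in\{0,1,2\}$. This is why the paper carries $M=2$ through Bernstein and ends up with the $\tfrac{4}{3}\log(2/\delta)$ and $\sqrt{8}$ constants in the statement. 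Your single comonotone coupling $r_n(j)=\lfloor y_n(j)+\xi_j\rfloor$ handles both cases uniformly, forces $r_n(j)-r_n'(j)$ to be an unbiased two-point rounding of $y_n(j)-y_n'(j)$ regardless of bin separation, and hence gives $|D_j-\EE D_j|<1$ and $|\zeta_j|<1$ everywhere; the resulting Bernstein constants are strictly smaller than those in the lemma, so the stated inequalities hold a fortiori. The trade-off is nil: your route is shorter, avoids the case split and the loose upper bound $|Y_n(j)-Y_n^{\otimes n}(j)|\le L_{nj}$, and yields tighter constants, at the cost of noting (as you do) that the shift coupling realizes the correct stochastic-rounding marginals.
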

Further we note that the protocol $\binomprotocol$ is a composition of the binomial mechanism and the protocol $\quantkprotocol$. A direct application of Theorem \ref{thm:discbinomial} and Lemma \ref{lemma:highprobsensitivity} gives us that the mechanism $\binomprotocol$ is $(\epsilon, 2\delta)$ differentially private for any $\delta \in (0,1)$ and $\epsilon$ satisfying the below conditions. \footnote{we choose $\delta, \delta'$ as $\delta$ in the application of Lemma \ref{lemma:highprobsensitivity}}
Note that the conditions required by Theorem \ref{thm:discbinomial} can be verified from the given conditions in Theorem \ref{thm:MSEbinomial}.
\end{proof}
We now provide a proof of Lemma \ref{lemma:sensitivityproof}. 

\begin{proof}[Proof of Lemma \ref{lemma:sensitivityproof}]

To this end we recall the definition of the random variables $\binomquantui_i(j)$. Given $X^{\max}$ and $X^{\min}$ we associate to every integer $r$ in $[0,k)$ a bin $B(r)$ defined as 

\[B(r) \defeq -X^{\max} + \frac{2rX^{\max}}{k-1}\]

Further given a number $X \in [-X^{\max},X^{\max}]$, let $r(X)$ be the integer such that $X \in [B(r(X)), B(r(X) + 1)]$. We can now define the random variable 
\[
\binomquantui(X) = 
\begin{cases}
    r(X)+1  & \text{w.p. } \frac{X - B(r(X))}{B(r(X) + 1) - B(r(X))} \\
    r(X) & \text{otherwise.}
\end{cases} 
\]

Now define the random variables $\binomquantui^X_i(j) \defeq \binomquantui(X_i(j))$ and similarly $\binomquantui^{X_{\otimes n}}_i(j) \defeq \binomquantui(X'_i(j))$. To provide high probability sensitivity bounds in accordance with Lemma \ref{lemma:highprobsensitivity}, we need to define a coupling between the random variables $\sum_i \binomquantui^X_i$ and $\sum_{i} \binomquantui^{X_{\otimes n}}_i$. To do the above we will define a coupling between the random variables $\binomquantui^X_i(j)$ and $\binomquantui^{X_{\otimes n}}_i(j)$. The coupled random variables will be sampled as follows. 

The defined coupling will have two cases. Define the set $S = \{(i,j) | r(X_i(j)) = r(X'_i(j))\}$. We first consider the case when $(i,j) \in S$. In this case we sample a random variable $\alpha_{ij} \in [0,1]$ uniformly at random and define the random variables

\[Y_{i}(j) = \begin{cases}
    r(X_i(j)) + 1  & \text{if } \alpha_{ij} \leq \frac{X_i(j) - B(r(X_i(j)))}{B(r(X_i(j)) + 1) - B(r(X_i(j)))} \\
    r(X_i(j)) & \text{otherwise.}
\end{cases}\]

\[Y^{\otimes n}_i(j) = \begin{cases}
    r(X'_i(j)) + 1  & \text{if } \alpha_{ij} \leq \frac{X'_i(j) - B(r(X'_i(j)))}{B(r(X'_i(j)) + 1) - B(r(X'_i(j)))} \\
    r(X'_i(j)) & \text{otherwise},
\end{cases}\]

Additionally wlog consider $X_i > X'_i$ (the roles of $i$ and $i'$ can be reversed in the following definitions otherwise) and define the auxiliary variables

\[a_i(j) \defeq \frac{B(r(X_i(j)) + 1) - X_i(j)}{2X^{\max}/(k-1)} \text{ and } b_i(j) \defeq \frac{X'_i(j) - B(r(X'_i(j)))}{2X^{\max}/(k-1)}\]

\[Z_i(j) = \begin{cases}
    0 & \text{w.p. } a_i(j) + b_i(j) \\
    1 & \text{otherwise},
\end{cases}\]

Further define 
\begin{equation}
\label{eqn:aaaa}
  L_{ij} \defeq |Y_i(j) - Y_i^{\otimes n}(j)| = Z_i(j)\;\;\text{ if } \;\; (i,j) \in S
\end{equation}

Otherwise if $(i,j)\notin S$ or equivalently $r(X_i(j)) \neq r(X'_i(j))$, we sample the bins independently and the random variables are defined as

\[Y_{i}(j) = \begin{cases}
    r(X_i(j)) + 1  & \text{w.p. } \frac{X_i(j) - B(r(X_i(j)))}{B(r(X_i(j)) + 1) - B(r(X_i(j)))} \\
    r(X_i(j)) & \text{otherwise.}
\end{cases}\]

\[Y^{\otimes n}_i(j) = \begin{cases}
    r(X'_i(j)) + 1  & \text{w.p. } \frac{X'_i(j) - B(r(X'_i(j)))}{B(r(X'_i(j)) + 1) - B(r(X'_i(j)))} \\
    r(X'_i(j)) & \text{otherwise},
\end{cases}\]

Additionally wlog consider $X_i > X'_i$ (the roles of $i$ and $i'$ can be reversed in the following definitions otherwise) and define the auxiliary variables

\[a_i(j) \defeq \frac{X_i(j) - B(r(X_i(j)))}{2X^{\max}/(k-1)} \text{ and } b_i(j) \defeq \frac{B(r(X'_i(j)) + 1)- X'_i(j)}{2X^{\max} /(k-1)}\]

\[Z_i(j) = \begin{cases}
    0 & \text{w.p. } 1 - a_i(j) - b_i(j) + a_i(j)b_i(j)\\
    1 & \text{w.p. } a_i(j) + b_i(j) - 2a_i(j)b_i(j) \\
    2 & \text{otherwise},
\end{cases}\]

In this case define $L_{i,j} \defeq r(X_i(j)) - r(X'_i(j)) + 1 + Z_i(j)$ and note that
\begin{equation}
\label{eqn:aaab}
  |Y_i(j) - Y_i^{\otimes n}(j)| \leq L_{ij} 
\end{equation}

With these definitions, it can be seen that the marginal distributions of $Y_{i}(j), Y^{\otimes n}_i(j)$ are equal to the marginal distributions of $\binomquantui^X_i(j)$, $\binomquantui^{X_{\otimes n}}_i(j)$ respectively. Further note that since $X'_i = X_i$ for all $i \in [1,n-1]$ we have that $Y_i = Y^{\otimes n}_i$ w.p. 1 for all $i \in [1,n-1]$. Therefore
\[ \| \sum_i Y_i - \sum_i Y^{\otimes n}_i \|_q^q = \|Y_n - Y^{\otimes n}_n\|_q^q \leq \sum_{j} L_{nj}^q,\]
where the inequality follows from \eqref{eqn:aaaa} and \eqref{eqn:aaab}. We wish to bound the RHS above. To that end consider the following claim which follows from the definitions. 
\begin{claim}
\label{claim:aaaa}
  \[Z_{i}(j) \leq 2 \quad \text{w.p. 1}\]
  \[\expect[Z_i(j)] = \begin{cases}
    a_i(j) + b_i(j) & \text{if } (i,j) \notin S\\
    1 - (a_i(j) + b_i(j)) & \text{otherwise}
    \end{cases}\]
    \[\expect[Z_i(j) - \expect[Z_i(j)]^2] \leq \begin{cases}
    a_i(j) + b_i(j) & \text{if } (i,j) \notin S\\
    1 - (a_i(j) + b_i(j)) & \text{otherwise}
    \end{cases} = \expect[Z_i(j)]\]
    \[\expect[Z_i(j) - \expect[Z_i(j)]^4] \leq 4\expect[Z_i(j) - \expect[Z_i(j)]^2] \leq 4  \expect[Z_i(j)].\]
\end{claim}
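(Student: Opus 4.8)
The plan is to prove \claimref{claim:aaaa} by a direct computation from the two-case definition of the coupling variable $Z_i(j)$, since the claim concerns only the (marginal) law of $Z_i(j)$. The one observation that makes all four statements fall out cleanly is the following product structure in the case $(i,j)\notin S$: writing $a=a_i(j)$ and $b=b_i(j)$ for the auxiliary quantities defined in that case, one verifies from the construction that $\Pr(Z_i(j)=0)=(1-a)(1-b)$, $\Pr(Z_i(j)=1)=a(1-b)+(1-a)b$, and $\Pr(Z_i(j)=2)=ab$, i.e.\ $Z_i(j)$ has the distribution of $B_a+B_b$ where $B_a\sim\Ber(a)$ and $B_b\sim\Ber(b)$ are independent. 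In the case $(i,j)\in S$, $Z_i(j)$ is a single Bernoulli variable with success probability $1-a_i(j)-b_i(j)$; here $a_i(j)+b_i(j)=1-(X_i(j)-X'_i(j))/(B(r+1)-B(r))\in[0,1]$ because $X_i(j)$ and $X'_i(j)$ lie in the common bin $[B(r),B(r+1)]$, so the stated probabilities are legitimate.

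Given this, the first two lines of the claim are immediate. Boundedness holds because $Z_i(j)$ takes values in $\{0,1\}$ when $(i,j)\in S$ and in $\{0,1,2\}$ otherwise, so $Z_i(j)\le 2$ with probability $1$. For the mean, $\expect[Z_i(j)]=\expect[B_a]+\expect[B_b]=a+b=a_i(j)+b_i(j)$ when $(i,j)\notin S$, and $\expect[Z_i(j)]=1-a_i(j)-b_i(j)$ when $(i,j)\in S$, matching the two displayed cases. For the variance, using additivity of variance over independent summands and $\var(\Ber(q))=q(1-q)\le q$, we get $\var(Z_i(j))=a(1-a)+b(1-b)\le a+b=\expect[Z_i(j)]$ in the non-$S$ case and $\var(Z_i(j))=q(1-q)\le q=\expect[Z_i(j)]$ with $q=1-a_i(j)-b_i(j)$ in the $S$ case, which is the third displayed bound.

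For the fourth central moment, denote $\mu_i(j)=\expect[Z_i(j)]$. In the $S$ case, for $Z\sim\Ber(q)$ one computes $\expect[(Z-q)^4]=q(1-q)^4+(1-q)q^4=q(1-q)\big((1-q)^3+q^3\big)\le q(1-q)=\var(Z)$, so $\expect[(Z_i(j)-\mu_i(j))^4]\le\var(Z_i(j))\le 4\var(Z_i(j))\le 4\expect[Z_i(j)]$ trivially. In the non-$S$ case, write $U=B_a-a$ and $V=B_b-b$, which are independent and mean-zero; expanding $(U+V)^4$, the terms with an odd power of $U$ or of $V$ have zero expectation, leaving $\expect[(U+V)^4]=\expect[U^4]+6\expect[U^2]\expect[V^2]+\expect[V^4]$. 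Now $\expect[U^4]\le\expect[U^2]=a(1-a)$ and $\expect[V^4]\le b(1-b)$ by the same Bernoulli computation as above, and $6\,a(1-a)\,b(1-b)\le\tfrac34\big(a(1-a)+b(1-b)\big)$ since each of $a(1-a),b(1-b)$ is at most $\tfrac14$; hence $\expect[(Z_i(j)-\mu_i(j))^4]\le\tfrac74\big(a(1-a)+b(1-b)\big)=\tfrac74\var(Z_i(j))\le 4\var(Z_i(j))=4\expect[(Z_i(j)-\mu_i(j))^2]\le 4\expect[Z_i(j)]$, which is the last line of the claim.

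The computation is routine throughout, so there is no real obstacle; the only points that need a moment of care are (i) recognizing the independent-Bernoulli-sum representation in the $(i,j)\notin S$ case, without which the fourth-moment bound becomes an unenlightening calculation on a generic three-atom distribution, and (ii) checking that $a_i(j)+b_i(j)\le 1$ in the $(i,j)\in S$ case so that the probability $a_i(j)+b_i(j)$ appearing in the construction is valid and the Bernoulli parameter $1-a_i(j)-b_i(j)$ lies in $[0,1]$.
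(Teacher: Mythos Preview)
Your proof is correct. The paper does not actually give a proof of \claimref{claim:aaaa} (it simply says the claim ``follows from the definitions''), so your direct verification is precisely the kind of computation the paper is asking the reader to supply; your observation that in the case $(i,j)\notin S$ the law of $Z_i(j)$ coincides with that of $\Ber(a)+\Ber(b)$ for independent Bernoullis is a clean way to organize the fourth-moment bound that the paper leaves implicit.
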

Further note that
\begin{equation}
  \label{eqn:aaac}
  \sum_j \expect[Z_n(j)] = \sum_{(n,j) \notin S} (a_i(j) + b_i(j)) + \sum_{(n,j) \in S} 1 - (a_i(j) + b_i(j)) \leq \frac{\|X_n - X'_n\|_1}{2X^{\max}/(k-1)}.
\end{equation}
A direct application of Bernstein's Inequality gives us that with probability at least $1 - \delta/2$
\begin{equation}
  \label{eqn:aaad}
  \sum_j  Z_n(j)  \leq \expect[\sum_j Z_n(j)] + \sqrt{2 \expect[\sum_j Z_n(j)] \log(2/\delta)} + \frac{4}{3}\log(2/\delta).
\end{equation}
This gives us that 
\begin{align*}
  \sum_j |Y_n(j) - Y^{\otimes}_n(j)| &\stackrel{a}{\leq} \sum_j L_{nj} \\
  & \stackrel{b}{\leq}\sum_{(i,j) \in S} Z_i(j) + \sum_{(i,j) \notin S} \left( r(X_i(j)) - r(X'_i(j)) + 1 + Z_i(j)\right) \\ 
  &\stackrel{c}{\leq} \frac{\|X_n - X'_n\|_1}{2X^{\max}/(k-1)} + \sqrt{2 \frac{\|X_n - X'_n\|_1}{2X^{\max} /(k-1)} \log(2/\delta)} + \frac{4}{3} \log(2/\delta)
\end{align*}

where $a,b$ follow from \eqref{eqn:aaaa} and \eqref{eqn:aaab} and $c$ follows from Claim \ref{claim:aaaa} and \eqref{eqn:aaac}. This proves the $\ell_1$ norm bound. 

We now focus on the $\ell_2$ norm case. For this we note that

\[ \forall(i,j) \;\;L_{ij} = \begin{cases} 
\frac{X_i(j) - X'_{i}(j)}{2X^{\max}/(k-1)} + Z_{i}(j) - \expect[Z_i(j)] & \text{if } X_i(j) \geq X'_{i}(j) \\
\frac{X'_i(j) - X_{i}(j)}{2X^{\max} /(k-1)} + Z_{i}(j) - \expect[Z_i(j)] & \text{if } X_i(j) < X'_{i}(j).
\end{cases}
 \] 

Therefore 
\begin{equation}
\label{eqn:aaae}
  \sqrt{\sum_{j} L^2_{nj}} = \sqrt{\sum_j \left(\frac{X_i(j) - X'_{i}(j)}{2X^{\max} /(k-1)}\right)^2} + \sqrt{\sum_j (Z_{n}(j) - \expect Z_{n}(j))^2}.
\end{equation}

We now bound $\sqrt{\sum_j (Z_{n}(j) - \expect Z_{n}(j))^2}$. We can now apply Bernstein's inequality on the random variable $(Z_{n}(j) - \expect Z_{n}(j))^2$ to get that with probability at least $1 - \delta/2$
\begin{equation}
\label{eqn:aaaf}
  \sum_j (Z_{n}(j) - \expect Z_{n}(j))^2 \leq \sum_j E[Z_{nj}] + \sqrt{8 \sum_j E[Z_{nj}] \log(2/\delta)} + \frac{4}{3}  \log(2/\delta),
\end{equation}

where the RHS uses Claim \ref{claim:aaaa} for bounding expectation and variance.

Therefore combining \eqref{eqn:aaae} and \eqref{eqn:aaaf}, we get that
\begin{align*}
  &\|Y_n - Y'_n\|_2 \leq \sqrt{\sum_{j} L^2_{nj}} \\
  &\leq \frac{\|X_n - X'_{n}\|_2}{2X^{\max}/(k-1)} + \sqrt{ \frac{\|X_n - X'_{n}\|_1}{2X^{\max}/(k-1)} + \sqrt{8 \left(\frac{\|X_n - X'_{n}\|_1}{2X^{\max}/(k-1)}\right) \log(2/\delta)} + \frac{4}{3}  \log(2/\delta)}.
\end{align*}

The proof is finished using a union bound.

\end{proof}

\section{Quantization with Rotation}
\label{sec:rotateddmebinmechproof}

We prove Theorem~\ref{thm:mainthmbinomialrotated} here.


\noindent \textbf{Differential Privacy}
\\
Given any two neighboring data sets $X = \{X_1, \ldots X_n\},X_{\otimes n}= \{X_1, \ldots X'_n\}$ we define a set of good rotations $U_{\text{good}}$ as follows 
\[ U_{\text{good}} = \left\{ R \in U | \forall\;i \in [n]\;\;\; \|RX_i\|_{\infty} \leq \frac{2\sqrt{\log(\frac{2nd}{\delta}})\bound_2}{\sqrt{d}}, \|RX'_n\|_{\infty} \leq \frac{2\sqrt{\log(\frac{2nd}{\delta}})\bound_2}{\sqrt{d}}\right\}\]
where $U$ is the set of $d \times d$ orthonormal matrices. The following lemma follows from \cite{AilonL09}. We note that similar analysis holds for uniformly sampled $R$ over real domain and we refer the reader to \cite{dasgupta2003elementary} for details. 
\begin{lemma}[\cite{AilonL09}]
\[P(HA \in U_{\text{good}}) \geq 1 - \delta\] 
\end{lemma}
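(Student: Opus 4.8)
The plan is to reduce the statement to a one-vector sub-Gaussian concentration bound and then take a union bound. Fix any $x \in \reals^d$ with $\|x\|_2 \le \bound$ and write $y = R x = \tfrac{1}{\sqrt d} H A x$, where $A = \diag(\epsilon_1,\dots,\epsilon_d)$ has i.i.d. Rademacher entries $\epsilon_k$ and $H$ is the $d\times d$ Walsh–Hadamard matrix. Then the $j$-th coordinate of the rotated vector is $y_j = \tfrac{1}{\sqrt d}\sum_{k=1}^d H_{jk}\epsilon_k x_k$. Since every entry of $H$ satisfies $|H_{jk}| = 1$, this is a weighted Rademacher sum $\sum_k a_k \epsilon_k$ with coefficients $a_k = H_{jk} x_k/\sqrt d$ obeying $\sum_k a_k^2 = \|x\|_2^2/d \le \bound^2/d$.

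First I would invoke Hoeffding's (sub-Gaussian) tail bound for Rademacher sums, $\Pr(|y_j| \ge t) \le 2\exp\!\big({-}t^2 d/(2\bound^2)\big)$. Choosing the threshold $t = 2\bound\sqrt{\log(2nd/\delta)}/\sqrt d$ makes the exponent equal to $-2\log(2nd/\delta)$, so each coordinate of each rotated vector exceeds $t$ with probability at most $2(2nd/\delta)^{-2}$. Next I would take a union bound over the events that matter: the $n+1$ vectors $X_1,\dots,X_n,X'_n$ together with their $d$ coordinates each, i.e.\ at most $2nd$ events. This yields total failure probability at most $2nd\cdot 2(2nd/\delta)^{-2} = \delta^2/(nd) \le \delta$ (using $\delta \le nd$, which is trivial), so with probability at least $1-\delta$ the rotation $R = \tfrac{1}{\sqrt d}HA$ lies in $U_{\text{good}}$. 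This is precisely the "flattening"/preconditioning property of the randomized Hadamard transform from \cite{AilonL09}, so one may alternatively just cite it; the analogous bound for a uniformly random orthogonal $R$ follows from the classical concentration of a single coordinate of a uniform point on the sphere of radius $\|x\|_2$ \cite{dasgupta2003elementary}, giving the same $O(\bound\sqrt{\log(nd/\delta)/d})$ threshold.

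The argument is essentially routine, and there is no real obstacle; the only point deserving a word of care is that the Rademacher vector $A$ is shared across all $n+1$ vectors, so the coordinate events are not independent. This is harmless here because we only need a high-probability statement and bound everything by a union bound, which does not require independence — we merely need each individual coordinate event to be unlikely, which the sub-Gaussian estimate provides. Matching the specific constant $2$ in the threshold $t = 2\bound\sqrt{\log(2nd/\delta)}/\sqrt d$ to the constants coming out of Hoeffding's inequality is the only bookkeeping step, and as shown above it goes through with room to spare.
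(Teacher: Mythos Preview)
Your proposal is correct; the paper does not actually prove this lemma but simply cites \cite{AilonL09} (and remarks that the analogous statement for a uniform rotation follows from \cite{dasgupta2003elementary}). Your Hoeffding-for-Rademacher-sums plus union bound argument is precisely the standard computation behind that citation, so you have filled in the details the paper omits and the approaches coincide.
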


Let $\Rot(\pi, HA)(X), \Rot(\pi, HA)(X_{\otimes n})$ represent the random output of the protocol $\Rot(\pi, HA)$ on $X, X_{\otimes n}$ respectively and let $S$ be any subset of the output range of $\Rot(\pi, HA)$. Given $\delta$ let $\epsilon$ be given by Theorem \ref{thm:discbinomial} with sensitivity parameters $\{\Delta_1(X^{max}, D), \Delta_2(X^{max}, D), \Delta_\infty(X^{max}, D)\}$. Given a set of vectors $V$ and a rotation matrix $R$ define $R \cdot V = \{Rv | v \in V\}$.  
\begin{align*}
  &Pr(\Rot(\pi, HA)(X) \in S) \\
  &\leq \int_{R \in U_{\text{good}}}\left(Pr(\Rot(\binomprotocol, HA)(X) \in S | R)\right) dR + Pr(R \notin U_{\text{good}}) \\
  &=  \int_{R \in U_{\text{good}}}Pr(\Rot(\binomprotocol, HA)(R\cdot X) \in R\cdot S)dR + Pr(R \notin U_{\text{good}}) \\
  &\stackrel{a}{\leq}  \int_{R \in U_{\text{good}}}\left(e^{\epsilon}Pr(\binomprotocol(R\cdot X_{\otimes n}) \in R\cdot S) + 2\delta \right)dR + Pr(R \notin U_{\text{good}}) \\
  &=  \int_{R \in U_{\text{good}}}e^{\epsilon}\left(Pr(\Rot(\binomprotocol, HA)(X_{\otimes n}) \in S | R) + 2\delta \right) dR + Pr(R \notin U_{\text{good}}) \\
  &\leq e^{\epsilon} Pr(\Rot(\binomprotocol, HA)(X_{\otimes n}) \in S) + 3\delta
\end{align*}

$a$ follows from $(\epsilon, 2\delta)$ differential privacy guarantee for $\binomprotocol$ from Theorem \ref{thm:MSEbinomial} and noting that $R \in U_{good}$ in the integral. Hence $\Rot(\binomprotocol)$ offers $(\epsilon, 3 \delta)$ differential-privacy.

\noindent \textbf{Mean Square Error}
\\
The bound on the MSE can be observed by noting that the total change the entire protocol can cause on any individual client vector is bounded by $2\bound$ in $\ell_2$ norm, therefore the total MSE can be at most $4\bound^2$ irrespective of the choice of rotation. 
Therefore
\begin{align*}
  \cE(\Rot(\pi_{sk}(\Bin(m,p))), HA) &= \cE(\Rot(\pi_{sk}(\Bin(m,p))), HA | R \in U_{good}) + \\
  & \qquad\qquad\qquad\qquad\cE(\Rot(\pi_{sk}(\Bin(m,p))), HA | R \notin U_{good})\\
  &\stackrel{a}{\leq} \cE(\Rot(\pi_{sk}(\Bin(m,p))), HA | R \in U_{good}) + 4\bound^2\delta^2 \\
  &\stackrel{b}{\leq}\frac{2 \log \frac{2nd}{\delta} \cdot D^2}{n(k-1)^2} + \frac{8 \log \frac{2nd}{\delta}}{n}\cdot\frac{mp(1-p) D^2}{(k-1)^2} + 4\bound^2\delta^2
\end{align*}
$a$ follows from the argument above and $b$ follows from the MSE guarantee in Theorem \ref{thm:MSEbinomial} and by noting that the rotation is in $U_{good}$. 

\end{document}